\documentclass[12pt,a4paper]{article}

\usepackage{arxiv}

\usepackage[table]{xcolor}
\usepackage{mathptmx}
\usepackage{microtype}

\usepackage{algpseudocode}
\usepackage{algorithm}

\usepackage{amsmath}
\usepackage{amssymb}
\usepackage{amsthm}
\usepackage{colortbl}
\newtheorem{theorem}{Theorem}[section]
\newtheorem{proposition}[theorem]{Proposition}
\newtheorem{lemma}[theorem]{Lemma}
\newtheorem{corollary}[theorem]{Corollary}
\newtheorem{definition}[theorem]{Definition}
\newtheorem{assumption}[theorem]{Assumption}
\newtheorem{remark}[theorem]{Remark}

\usepackage{booktabs,array,multirow,colortbl}
\usepackage{graphicx}
\graphicspath{{./}}

\usepackage{tikz}
\usetikzlibrary{shapes.geometric,arrows.meta,positioning,calc,backgrounds,fit,matrix,patterns}
\usepackage{pgfplots}
\pgfplotsset{compat=1.18}
\usepackage{pgfplotstable}
\usepackage{enumitem,caption,subcaption,setspace}
\usepackage[numbers,sort&compress]{natbib}
\usepackage[colorlinks=true,linkcolor=blue!55!black,citecolor=blue!55!black,urlcolor=blue!55!black]{hyperref}

\usepackage{fancyhdr}
\definecolor{propbg}{HTML}{EAF4FC}
\definecolor{hdrblue}{HTML}{1F4E79}
\definecolor{hdrmid}{HTML}{2E75B6}
\definecolor{ablgreen}{HTML}{E8F5E9}
\definecolor{note}{HTML}{FFF9C4}

\definecolor{IEEEblue}{RGB}{26,60,110}
\definecolor{fibgold}{RGB}{180,140,0}
\definecolor{rowgray}{RGB}{245,245,245}
\definecolor{darkgreen}{RGB}{0,110,0}
\definecolor{darkred}{RGB}{160,0,0}
\definecolor{nodeblue}{RGB}{52,101,164}
\definecolor{nodered}{RGB}{180,40,40}
\definecolor{cFedAvg}{RGB}{31,119,180}
\definecolor{cFedRep}{RGB}{255,127,14}
\definecolor{cRDFL}{RGB}{44,160,44}
\definecolor{cFibFL}{RGB}{214,39,40}
\definecolor{cFibFLp}{RGB}{148,103,189}
\definecolor{cFibFLpp}{RGB}{23,190,207}
\definecolor{ieeeblue}{RGB}{26,60,110}
\definecolor{ieeered}{RGB}{180,40,40}
\definecolor{amber}{RGB}{180,120,0}
\definecolor{lightred}{RGB}{255,230,230}
\definecolor{lightamber}{RGB}{255,248,220}
\definecolor{lightblue}{RGB}{220,232,250}
\definecolor{darkgray}{RGB}{80,80,80}

\definecolor{bestcell}{RGB}{198,239,206}
\definecolor{posGap}{RGB}{198,239,206}
\definecolor{negGap}{RGB}{255,199,206}

\newcommand{\fibfl}{\textsc{FibFL}}
\newcommand{\fibflp}{\textsc{FibFL+}}
\newcommand{\fibflpp}{\textsc{FibFL++}}
\newcommand{\fedavg}{\textsc{FedAvg}}
\newcommand{\fedrep}{\textsc{FedRep}}
\newcommand{\rdfl}{\textsc{RDFL}}

\usepackage{titlesec}
\titleformat{\section}{\large\bfseries\color{IEEEblue}}{\thesection.}{0.5em}{}[\vspace{-4pt}\rule{\linewidth}{0.6pt}\vspace{2pt}]
\titleformat{\subsection}{\normalsize\bfseries}{\thesubsection}{0.5em}{}
\titleformat{\subsubsection}{\normalsize\itshape\bfseries}{\thesubsubsection}{0.5em}{}

\renewenvironment{abstract}{%
  \noindent\rule{\linewidth}{0.6pt}\\[2pt]
  \begin{center}{\large\bfseries\color{IEEEblue}Abstract}\end{center}
  \noindent\ignorespaces
}{\par\noindent\rule{\linewidth}{0.6pt}\bigskip}

\newcommand{\best}[1]{\noindent\textbf{#1}}

\title{
FIRMA:
FIbonacci Ring Model
Aggregation for Privacy-Preserving
Federated Learning}

\author{
 Rachid Hedjam \\
  Department of Computer Science\\
 Bishop's University \\
  Sherbrooke, Qc, Canada \\
  \texttt{rhedjam@ubishops.ca} 
  }

\begin{document}

\fontsize{12}{14}\selectfont

\maketitle
\begin{abstract}
Federated learning protocols face a structural trilemma: canonical
server-based aggregation~\cite{mcmahan2017} creates a single point of
failure and gradient inversion risk; decentralised ring-gossip
alternatives~\cite{hu2019segmented} expose classification heads to
semi-honest peers via uninformed uniform weights; and personalised
methods~\cite{collins2021exploiting} reintroduce central aggregation.
No existing protocol simultaneously achieves server-free operation,
permanently private heads, ring topology, and principled asymmetric
neighbour weighting.
We propose FIRMA (\textbf{FI}bonacci \textbf{R}ing \textbf{M}odel
\textbf{A}ggregation), a family of three progressively enhanced
federated learning protocols:
1) \fibfl\ establishes the foundation: server-free ring aggregation with
Fibonacci-weighted neighbour blending and permanently private
classification heads.
2) \fibflp\ augments this with accuracy-gated neighbour suppression,
selectively down-weighting poorly-converged peers while preserving the
Fibonacci directional bias.
3) \fibflpp, the full system, completes the family with a 2-opt ring
permutation that maximises adjacent-client class diversity, global ring
coverage via $K_g{=}\lceil N/2\rceil$ gossip passes, and cosine-annealed
self-retention calibration.
We establish a convergence rate bound and three supporting propositions
governing normalisation, coverage, retention, and diversity optimality.
Systematic experiments across 28 configurations --- four benchmarks
crossed with seven heterogeneity regimes --- demonstrate that \fibflpp\
surpasses \fedavg\ in all 12 label-skew configurations, with a peak
advantage of $+20.7$\,pp on CIFAR-10 at $K{=}1$.
Under Dirichlet heterogeneity, \fibflpp\ is the Pareto-dominant method
among all server-free protocols, achieving the highest accuracy in 17
of 28 configurations. The code can be found here \url{https://github.com/Hedjrachid/FIRMA}
\end{abstract}

\noindent\textbf{Index Terms:} Federated learning; personalised FL; decentralised
optimisation; ring topology; Fibonacci sequence; golden ratio; head privacy; non-IID data
heterogeneity; gossip protocols.

\section{Introduction}
\label{sec:intro}

\subsection*{Background and Motivation.} 
The rapid proliferation of edge devices---smartphones, wearables, autonomous sensors, and
federated medical instruments---has generated vast quantities of sensitive data whose
centralisation is increasingly untenable.
Privacy legislation (GDPR, HIPAA, China's PIPL), data sovereignty constraints, and
communication bandwidth limitations together prohibit the na\"{i}ve collection of client data
for centralised training.
Federated learning (FL), formalised by McMahan et al.~\cite{mcmahan2017}, addresses this
by training models in situ: clients compute local parameter updates and communicate only
aggregated gradients or model parameters to a central coordinator, so raw data never leaves
the device.
Despite this advance, three interconnected structural weaknesses constrain deployment in
genuinely decentralised systems.

\medskip\noindent\noindent\textbf{Server dependency and gradient inversion risk.}
The central parameter server in \fedavg~\cite{mcmahan2017} receives the complete gradient
trajectory of every client at each round.
A semi-honest server can exploit this to reconstruct client training samples with high
fidelity via gradient inversion attacks~\cite{zhao2020idlg,geiping2020inverting}---a threat
that differential privacy mitigates only with an accuracy penalty that, in practice, renders
the global model unusable~\cite{karimireddy2020scaffold}.
Beyond privacy, the server is a single point of failure and a communication bottleneck whose
throughput scales linearly with $N$: $2Np$ parameters must transit the server per round
regardless of local similarity.

\medskip\noindent\textbf{Personalisation and head exposure.}
Global model averaging imposes a uniform decision boundary on all clients.
Under non-IID data---the prevalent regime in practice~\cite{zhao2018federated}---convergence
analysis~\cite{li2020convergence} shows that gradient dissimilarity across clients
degrades the optimisation bound.
More critically, the classification head $\boldsymbol{\eta}_{i}$ encodes class-discriminative
information directly: transmitting it to the server or to ring neighbours constitutes a
privacy leakage that gradient inversion attacks do not capture.
Keeping $\boldsymbol{\eta}_{i}$ permanently local is therefore a desirable privacy property
in its own right~\cite{collins2021exploiting}, independent of and orthogonal to differential
privacy.

\medskip\noindent\noindent\textbf{Topology and weight design.}
Decentralised alternatives replace the server with peer-to-peer gossip on user-specified
graphs~\cite{xiao2004fast,lian2017decentralised}.
Ring topologies are particularly appealing in practice: they require no global network
knowledge, admit simple token-passing implementations, and are provably connected for any
$N\geq2$~\cite{lian2017decentralised}.
However, existing ring FL protocols~\cite{wang2021rdfl} use uniform weights $(1/2,1/2)$
for left and right neighbours---an arbitrary, uninformed choice that treats all neighbours
identically regardless of model quality, data diversity, or positional relationship.
The design of principled, asymmetric, parameter-free neighbour weights for ring gossip is,
to our knowledge, an open problem in the FL literature.

\subsection*{Existing Approaches and Their Gaps}

Figure~\ref{fig:design_space} maps the FL design space along the two axes most relevant
to our work: \textit{head privacy} (whether the classification head is shared with any
external party) and \textit{topology freedom} (whether a central server is required).
Four quadrants emerge.

\begin{figure}[!htp]
\centering
\includegraphics[width=0.5\linewidth]{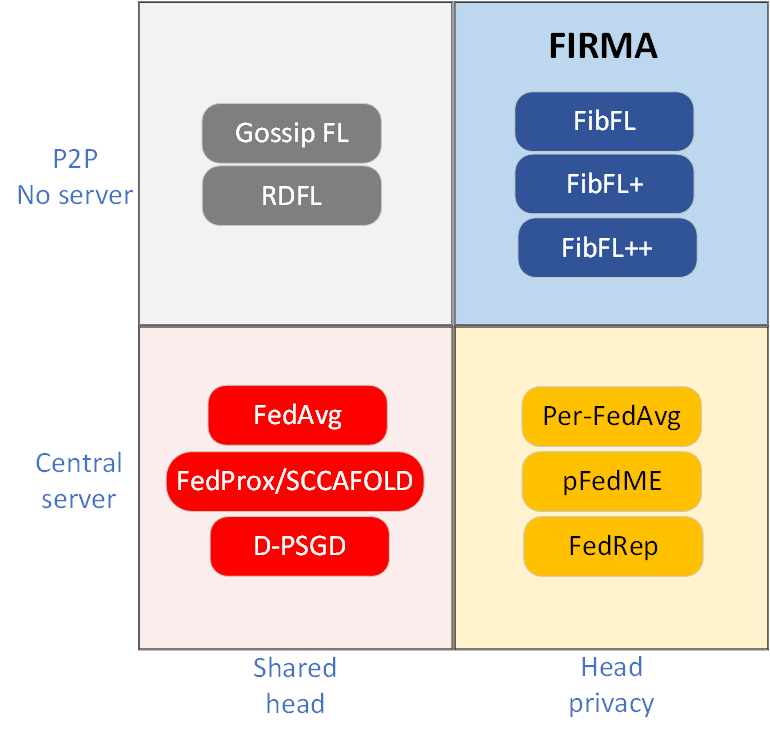}
\caption{FL design space along two axes: head privacy ($x$) and topology freedom ($y$).
Red = server-required, shared head;
amber = server-required, private head;
grey = P2P ring, shared head;
blue = \fibfl\ family (this work).
No prior protocol occupies the top-right quadrant: P2P ring topology with a permanently
private head (color online).}
\label{fig:design_space}
\end{figure}

\noindent\textbf{Bottom-left: server + shared head (red).}
The canonical quadrant.
\fedavg~\cite{mcmahan2017}, FedProx~\cite{li2020fedprox}, and
SCAFFOLD~\cite{karimireddy2020scaffold} all aggregate the full model---including the
discriminative head---through a central server.
This design is communication-efficient but exposes both the head and the full gradient
trajectory to the server, enabling gradient inversion attacks~\cite{zhao2020idlg,geiping2020inverting}.

\medskip\noindent\noindent\textbf{Bottom-right: server + private head (amber).}
\fedrep~\cite{collins2021exploiting} and meta-learning approaches such as
pFedMe~\cite{dinh2020pFedMe} and Per-FedAvg~\cite{fallah2020maml} introduce
personalisation by keeping the head local.
This eliminates head exposure, but the extractor gradient still flows through the central
server at every round.
The server remains a single point of failure and a potential gradient-inversion adversary.

\medskip\noindent\noindent\textbf{Top-left: P2P ring + shared head (grey).}
\rdfl~\cite{wang2021rdfl} and Gossip-based FL~\cite{hu2019segmented} remove the server by
using peer-to-peer ring communication.
However, they transmit the full model---head included---to ring neighbours, who may be
adversarial or semi-honest.
Uniform neighbour weights $(1/2,1/2)$ treat all neighbours identically regardless of model
quality, data diversity, or ring position.\\

In this paper we propose FIRMA (\textbf{FI}bonacci \textbf{R}ing
\textbf{M}odel \textbf{A}ggregation), a family of three progressively
enhanced federated learning protocols motivated by a gap that remains
vacant in the prior literature.
\textbf{The Top-right quadrant of the FL design space --- P2P ring
with private head} --- is, to the best of our knowledge, unoccupied by any existing protocol.
No existing method simultaneously provides:
(i)~server-free, peer-to-peer operation;
(ii)~a classification head that never leaves the local device;
(iii)~principled asymmetric neighbour weights; and
(iv)~ring topology with global coverage.
The Fibonacci weight pair $(\alpha,\beta){=}(1/\phi,\,1/\phi^{2})$
satisfies $\alpha+\beta=1$ by the golden-ratio identity, providing the
only normalised, parameter-free asymmetric pair derivable from a single
mathematical constant, and forms the mathematical foundation of the
entire family.
This gap is the central motivation for the present work.

The three variants are designed around a single guiding principle: each
adds exactly one structural component over its predecessor, so that the
incremental contribution of each component can be isolated directly from
the results without a separate ablation experiment.
\fibfl\ establishes the minimal viable ring FL protocol --- server-free
gossip with Fibonacci-weighted aggregation and permanently private heads
--- and serves as the architectural foundation.
Its limitation is that it treats all ring neighbours equally regardless
of their convergence state, and uses a fixed arbitrary ring ordering
that may place statistically similar clients adjacent, reducing gradient
diversity.
\fibflp\ addresses the first limitation by introducing an accuracy-gated
interpolation that suppresses poorly-converged neighbours, making the
blend adaptive to the federation's current training state without
abandoning the Fibonacci directional bias.
\fibflpp\ addresses the second limitation by computing a 2-opt ring
permutation that maximises adjacent-client class diversity, and further
strengthens the protocol with $K_g{=}\lceil N/2\rceil$ gossip passes for
global ring coverage and cosine-annealed self-retention for stable
convergence.
The result is a family in which each member occupies a distinct point
in the accuracy--convergence trade-off space: \fibfl\ and \fibflp\
offer immediate warm-start with zero ring initialisation cost, making
them suitable for short-budget federations, while \fibflpp\ is the
recommended choice when moderate-to-strong heterogeneity and fairness
requirements justify the additional computational overhead.

\subsection*{Our Contributions}

The key mathematical insight is that the golden-ratio Fibonacci identity
$1/\phi+1/\phi^{2}=1$ ($\phi=(1+\sqrt{5})/2$) provides a naturally normalised, asymmetric,
and parameter-free weight pair $(\alpha,\beta)$ for ring gossip, requiring no additional
hyperparameters.
As illustrated in Figure~1, the \fibfl\ family is the first to jointly inhabit the
P2P-ring, private-head quadrant of the FL design space.
The specific contributions are:

\begin{itemize}[leftmargin=25pt, labelsep=5pt]
  \item[C1.] \noindent\textbf{\fibfl\ basic protocol} (\S4.3).
    The first ring FL protocol to combine head privacy with asymmetric Fibonacci-ratio
    neighbour weighting and two-phase Adam local training.
    The architectural combination is novel; the two-phase structure is adapted from
    \fedrep~\cite{collins2021exploiting}.
  \item[C2.] \noindent\textbf{\fibflp} (\S4.4).
    An accuracy-gated interpolation between the Fibonacci prior and an accuracy-derived
    posterior, suppressing under-performing neighbours while retaining the Fibonacci
    directional bias.
    The prior-posterior mixture and its application to ring gossip are original.
  \item[C3.] \noindent\textbf{\fibflpp\ with three structural fixes} (\S4.5).
    (A)~2-opt diversity ring ordering; (B)~K-pass global-coverage gossip with calibrated
    per-pass retention; (C)~cosine-annealed $\gamma$ with \fedavg-style warmup.
    All three fixes are original to this work.
  \item[C4.] \noindent\textbf{Theoretical analysis} (\S5).
    Three propositions and a convergence bound (Theorem~5.6) linking the spectral gap, data
    heterogeneity, and self-retention to the optimisation--personalisation trade-off.
  \item[C5.] \noindent\textbf{Empirical evaluation} (\S6).
    Systematic comparison across 168 configurations (6 methods $\times$ 4 datasets
    $\times$ 7 partitioning regimes).
    Ablation of Fix A, B, C in isolation quantifies each fix's independent contribution.
\end{itemize}


The remainder of this paper is organized as fellows: 
Section~\ref{sec:related} reviews related work.
Section~\ref{sec:problem} formalises the problem.
Section~\ref{sec:protocol} presents the FibFL family.
Section~\ref{sec:theory} provides theoretical analysis.
Section~\ref{sec:experiments} presents the Experiments and Evaluation, and final Section~\ref{sec:discussion} provides a general discussion and a conclusion to the work.

\section{Related Work}
\label{sec:related}

\subsection{Centralised Federated Learning}

The canonical \fedavg\ protocol~\cite{mcmahan2017} runs $E$ local SGD steps on each client
before a server-side weighted aggregation.
Its convergence under non-IID data was subsequently characterised by Li
et al.~\cite{li2020convergence}, who isolated gradient dissimilarity as the principal source
of degradation.
FedProx~\cite{li2020fedprox} mitigates client drift by augmenting local objectives with a
proximal penalty $(\mu/2)\|\boldsymbol{\omega}-\boldsymbol{\omega}^{t}\|^{2}$ that bounds
deviation from the global model.
SCAFFOLD~\cite{karimireddy2020scaffold} eliminates client drift through control variates;
FedNova~\cite{wang2020fedNova} corrects objective inconsistency arising from heterogeneous
local update counts.
All these methods route updates through a central server and aggregate the full model,
including the classification head.

\subsection{Personalised Federated Learning}

\textit{Layer decomposition.}
\fedrep~\cite{collins2021exploiting} is the most directly related personalised baseline.
It decomposes the model into a globally shared feature extractor and a locally retained
classification head, introducing the two-phase training procedure that the FibFL family
adapts with persistent Adam optimisers.
The key departure of FibFL from FedRep is the elimination of the central server:
FedRep's extractor aggregation requires a server at every round, exposing gradient
information to a centralised party.
LG-FedAvg~\cite{liang2020lgfed} takes the complementary view, sharing the head globally
while keeping the extractor local.

\noindent\textit{Regularisation-based personalisation.}
pFedMe~\cite{dinh2020pFedMe} formulates personalisation as a bilevel Moreau-envelope
problem, admitting convergence guarantees for the personalised model under mild assumptions.
Ditto~\cite{li2021ditto} adds a regularisation term towards the global model, simultaneously
achieving fairness and personalisation guarantees.
Per-FedAvg~\cite{fallah2020maml} applies MAML to learn a global initialisation for rapid
per-client fine-tuning.

\noindent\textit{Mixture and multi-task methods.}
APFL~\cite{deng2020apfl} adaptively mixes local and global models via a per-client convex
combination.
FedEM~\cite{marfoq2021fedEM} models each client's distribution as a mixture of latent
global components, solving the FL problem jointly with an EM procedure.
None of the personalised methods surveyed operates without a central server.

\subsection{Decentralised and Gossip-Based Federated Learning}

Decentralised SGD on general graphs~\cite{lian2017decentralised} established that the
convergence rate of gossip-based training depends on the spectral gap $1-\rho$ of the
doubly-stochastic mixing matrix~\cite{xiao2004fast}.
Gossip FL variants~\cite{hu2019segmented} extend this to heterogeneous
data; their convergence bounds contain a consensus-error term proportional to $(1-\rho)^{-1}$,
confirming that tighter spectral gaps are preferable.
\rdfl~\cite{wang2021rdfl} specialises gossip-based FL to a directed ring with uniform
$(1/2,1/2)$ weights.
To our knowledge, no prior work applies principled asymmetric weights to ring FL gossip,
nor jointly enforces head privacy in the ring setting.
The FibFL family addresses both gaps simultaneously.

\subsection{Position in the Literature}

Table~\ref{tab:design} and Figure~1 jointly characterise the design space.
The four-quadrant framing is our contribution; the individual constraints are individually
motivated by prior work as cited.
No prior protocol occupies the top-right quadrant of Figure~1.

\section{Problem Formulation}
\label{sec:problem}

The local and global objectives~(1)--(2) follow standard FL
notation~\cite{mcmahan2017,collins2021exploiting,li2020convergence}.
The four-constraint design space (Figure~1) and the formal definitions below are our
framing contribution; each individual constraint is motivated by prior work as cited.

\subsection{Federation Setting}

Consider $N$ clients $\mathcal{C}=\{c_{1},\ldots,c_{N}\}$ on a directed ring.
Client $c_{i}$ holds private dataset
$D_{i}=\{(x^{j},y^{j})\}_{j=1}^{n_{i}}$, $x^{j}\in\mathbb{R}^{d}$,
$y^{j}\in\{0,\ldots,C-1\}$.
Let $n=\sum_{i}n_{i}$.
Data distributions are heterogeneous: $P_{i}\neq P_{j}$ in
general~\cite{zhao2018federated,hsu2019}.

\subsection{Model Architecture}

Following~\cite{collins2021exploiting}, each client maintains a feature extractor
$\phi_{i}:\mathcal{X}\to\mathbb{R}^{e}$ with parameters
$\boldsymbol{\theta}_{i}\in\mathbb{R}^{p}$, and a classification head
$h_{i}:\mathbb{R}^{e}\to\mathbb{R}^{C}$ with parameters
$\boldsymbol{\eta}_{i}\in\mathbb{R}^{q}$.
The full local model is $f_{i}(x)=h_{i}(\phi_{i}(x))$.
In our experiments $\phi_{i}$ is a three-layer MLP with LayerNorm and ReLU (hidden size
256, $e=128$); LayerNorm is chosen over BatchNorm because it does not require batch
size~$>1$, which fails under strong non-IID partitioning.

\subsection{Objectives}

The local empirical risk~\cite{mcmahan2017} is:
\begin{equation}
\min_{\boldsymbol{\theta}_{i},\boldsymbol{\eta}_{i}}
\mathcal{L}_{i}(\boldsymbol{\theta}_{i},\boldsymbol{\eta}_{i})
= \frac{1}{n_{i}}\sum_{(x,y)\in D_{i}}\ell(f_{i}(x),y),
\label{eq:local}
\end{equation}
where $\ell$ is the cross-entropy loss.
The federation objective is:
\begin{equation}
\min_{\{\boldsymbol{\theta}_{i},\boldsymbol{\eta}_{i}\}}
\sum_{i=1}^{N}\frac{n_{i}}{n}\mathcal{L}_{i}(\boldsymbol{\theta}_{i},\boldsymbol{\eta}_{i}).
\label{eq:global}
\end{equation}

\subsection{The FibFL Design Space: Three Hard Constraints}

We formalise three hard constraints that jointly characterise the design space of
Figure~\ref{fig:design_space}.

\begin{definition}[Head Privacy --- Def.~A]
\label{def:A}
A protocol satisfies \emph{head privacy} if the classification head parameters
$\boldsymbol{\eta}_{i}$ of every client $c_{i}$ are never transmitted to any other client
or to any server for all communication rounds $r\geq1$.
The head is updated exclusively through local computation.
\end{definition}

\begin{definition}[Server-Free Operation --- Def.~B]
\label{def:B}
A protocol is \emph{server-free} if no entity (server or coordinator) aggregates model
parameters from more than one client per round.
All inter-client communication is direct and peer-to-peer; no client acts as a hub for
others.
\end{definition}

\begin{definition}[Ring Topology --- Def.~C]
\label{def:C}
A protocol operates on a \emph{ring topology} if client $c_{i}$ may communicate only with
its two immediate ring neighbours $c_{(i-1)\bmod N}$ (left) and $c_{(i+1)\bmod N}$
(right).
No other communication links are permitted.
\end{definition}

Table~\ref{tab:design} shows that no prior protocol satisfies Definitions~A, B,
and~C simultaneously.

\begin{table}[!htp]
\centering
\caption{Design property comparison. \checkmark~= satisfied; $\times$~= not.
$^{\dagger}$ marks properties original to this work;
$^{a}$ marks a structure adapted from prior art.
No prior protocol satisfies all four core constraints.}
\label{tab:design}
\begin{tabular}{lcccccc}
\toprule
Property & \fedavg & \fedrep & \rdfl & \fibfl & \fibflp & \fibflpp \\
\midrule
No central server             & $\times$ & $\times$ & \checkmark & \checkmark & \checkmark & \checkmark \\
Private head                  & $\times$ & \checkmark & $\times$ & \checkmark & \checkmark & \checkmark \\
Ring topology                 & $\times$ & $\times$ & \checkmark & \checkmark & \checkmark & \checkmark \\
Fibonacci weights$^{\dagger}$ & $\times$ & $\times$ & $\times$ & \checkmark & \checkmark & \checkmark \\
Accuracy gating$^{\dagger}$   & $\times$ & $\times$ & $\times$ & $\times$ & \checkmark & \checkmark \\
2-opt ring order$^{\dagger}$  & $\times$ & $\times$ & $\times$ & $\times$ & $\times$ & \checkmark \\
$K_g$-pass coverage$^{\dagger}$ & $\times$ & $\times$ & $\times$ & $\times$ & $\times$ & \checkmark \\
Cosine-ann.\ $\gamma^{\dagger}$ & $\times$ & $\times$ & $\times$ & $\times$ & $\times$ & \checkmark \\
Two-phase train$^{a}$         & $\times$ & \checkmark & $\times$ & \checkmark & \checkmark & \checkmark \\
Persistent Adam$^{\dagger}$   & $\times$ & $\times$ & $\times$ & \checkmark & \checkmark & \checkmark \\
FedAvg warmup$^{\dagger}$     & $\times$ & $\times$ & $\times$ & $\times$ & $\times$ & \checkmark \\
\noindent\textbf{All 4 core satisfied} & $\times$ & $\times$ & $\times$ & \checkmark & \checkmark & \checkmark \\
\bottomrule
\multicolumn{7}{l}{\scriptsize $^{a}$Structure from~\cite{collins2021exploiting}; Adam + persistence new.\quad
$^{\dagger}$Original to this work.}
\end{tabular}
\end{table}

\subsection{Data Heterogeneity Models}

We use three standard partitioning strategies~\cite{hsu2019}:
\noindent\textbf{IID} (equal random subset);
\noindent\textbf{Dirichlet} ($\alpha$) (class proportions from Dir($\alpha\cdot\mathbf{1}_{N}$);
smaller $\alpha$ yields stronger skew); and
\noindent\textbf{Label-skew} ($K$) ($K$ primary, $K+1$ secondary, and 3\% minority classes per
client~\cite{wang2021rdfl}).

\section{The FibFL Protocol Family}
\label{sec:protocol}

Figure~\ref{fig:ring} illustrates the ring topology and information flow. The three variants are presented as an ablation by construction rather
than competing proposals: each isolates the incremental contribution of
a single architectural component, allowing its effect to be quantified
directly from the results tables.
\fibfl\ and \fibflp\ additionally serve as lightweight alternatives for
short-budget federations where \fibflpp's ring initialisation cost is
unacceptable; \fibflpp\ is the recommended choice in all other settings.
\begin{figure}[!htp]
\centering
\includegraphics[width=0.5\linewidth]{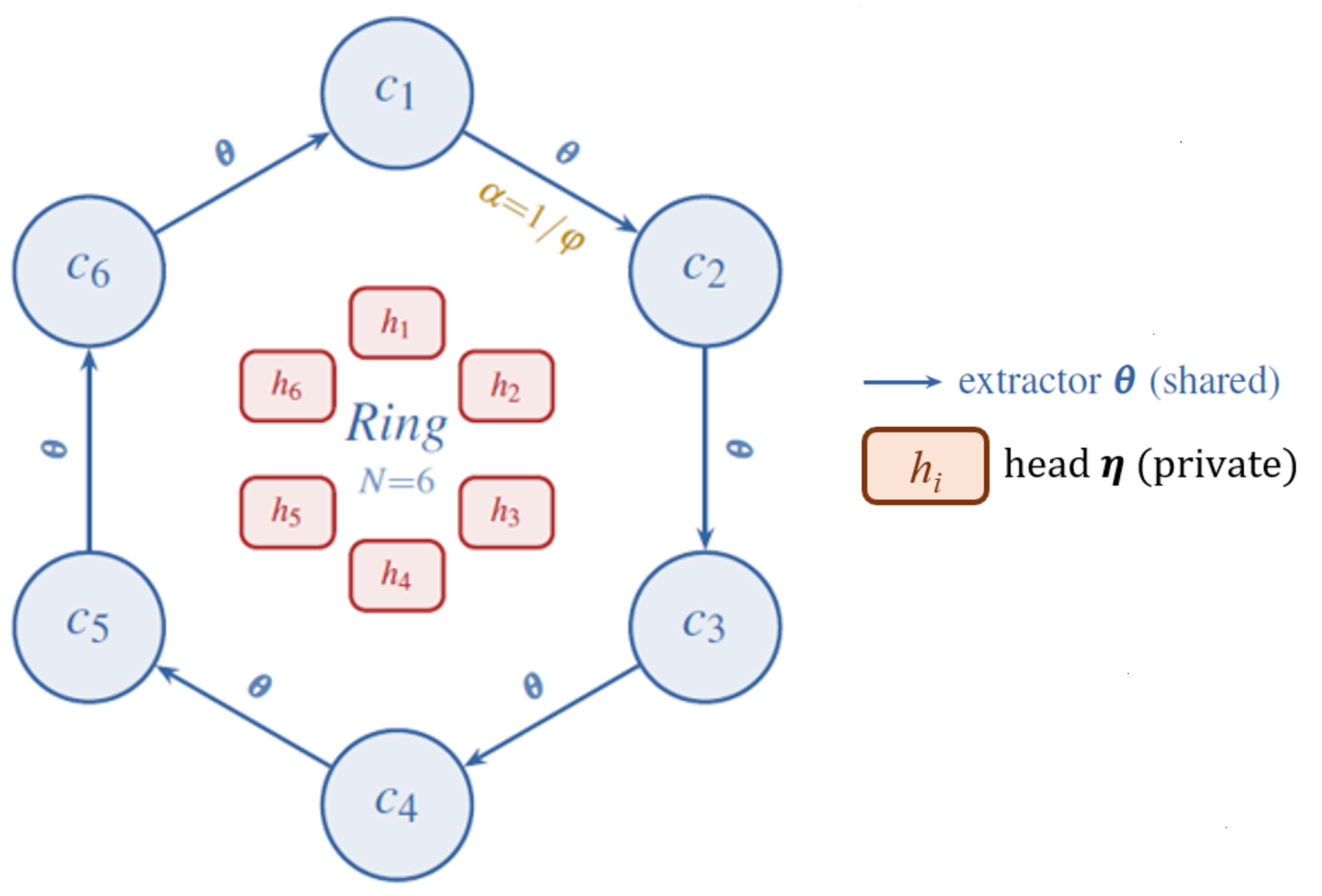}
\caption{\noindent\textbf{FibFL ring topology.} $N$ clients are arranged in a directed ring.
Each round, clients share their extractor parameters $\boldsymbol{\theta}_{i}$ with ring
neighbours (solid arrows) using Fibonacci-ratio weights $\alpha$ (left) and $\beta$ (right).
The head $h_{i}$ (red badge) is permanently private.
In \fibflpp, the ring order $\sigma^{*}$ is optimised by 2-opt to maximise class diversity
between adjacent clients.}
\label{fig:ring}
\end{figure}

\subsection{Mathematical Foundation: Fibonacci Weights}

We set the golden ratio $\phi=(1+\sqrt{5})/2\approx1.618$ and define the
\textit{Fibonacci weight pair}
\begin{equation}
\alpha=\frac{1}{\phi}\approx0.618,\qquad\beta=\frac{1}{\phi^{2}}\approx0.382.
\label{eq:fibonacci}
\end{equation}
By the golden-ratio identity $\phi^{2}=\phi+1$, the pair satisfies
$\alpha+\beta=1/\phi+1/\phi^{2}=(\phi+1)/\phi^{2}=1$,
so the blend is a proper convex combination.
Consecutive Fibonacci numbers converge to this ratio ($F_{k+1}/F_{k}\to\phi$), providing
a principled, parameter-free default with a natural mathematical justification absent from
arbitrary weight choices.

Unlike the uniform weights $(1/2,1/2)$ of \rdfl~\cite{wang2021rdfl}, the asymmetric pair
$\alpha>\beta$ introduces a directed left-bias analogous to a biased random walk on a
directed cycle, yielding a strictly tighter spectral gap (Lemma~5.1).

\subsection{Two-Phase Local Training}

All three FibFL variants adopt two-phase Adam local training, extending the structure of
\fedrep~\cite{collins2021exploiting}.
At each round $r$: \textbf{\textit{Phase~1 (head adaptation, $E_{h}$ epochs):}}
Freeze $\phi_{i}$; run $E_{h}$ Adam steps on $h_{i}$ with persistent optimiser
$\mathcal{O}^{h}_{i}$.
This adapts the head to the current extractor representation so that the Phase~2 extractor
gradient is not corrupted by head misfit.
\textbf{\textit{Phase~2 (extractor update, $E_{e}$ epochs):}}
Freeze $h_{i}$; run $E_{e}$ Adam steps on $\phi_{i}$ with persistent optimiser
$\mathcal{O}^{e}_{i}$.
The two-phase local training structure is adapted from
\fedrep~\cite{collins2021exploiting}; the use of Adam and the
persistence of optimiser state across rounds are original to \fibfl.\\

In standard FL, a fresh optimiser is instantiated at every round,
discarding all accumulated curvature knowledge at the round boundary.
With $E_{h}{=}1$ head epoch per round, a reset Adam wastes its single
step rebuilding momentum from scratch.
In \fibfl, the head optimiser $\mathcal{O}^{h}_{i}$ and extractor
optimiser $\mathcal{O}^{e}_{i}$ are \emph{persistent}: initialised once
before round 1 and carried forward across all $R$ rounds.
This has three practical effects:
(i)~no warm-up waste --- Adam's momentum estimates are already calibrated
from round 2 onwards;
(ii)~effective single-epoch training --- one persistent-Adam step is far
more informative than one reset-Adam or SGD step;
(iii)~post-blend stability --- the accumulated state provides a
regularising pull towards previously well-explored regions, dampening
oscillation after ring gossip shifts the extractor parameters.
Persistence is purely local: no optimiser state is ever transmitted.\\

\subsection{FibFL: Basic Protocol}

\begin{definition}[\fibfl\ Blending]
Let $\boldsymbol{\theta}_{i}$ be client $c_{i}$'s extractor parameters after local
training, and $\sigma=\mathrm{id}$.
For position $p$ (client $i=\sigma(p)$):
\begin{align}
\tilde{\boldsymbol{\theta}}_{i} &= \alpha\boldsymbol{\theta}_{L} + \beta\boldsymbol{\theta}_{R},
\label{eq:blend}\\
\boldsymbol{\theta}_{i}^{\mathrm{new}} &= \gamma\boldsymbol{\theta}_{i} + (1-\gamma)\tilde{\boldsymbol{\theta}}_{i},
\label{eq:selfretain}
\end{align}
where $L=(p-1)\bmod N$, $R=(p+1)\bmod N$, and $\gamma\in(0,1)$ is the self-retention
coefficient.
The head $\boldsymbol{\eta}_{i}$ is never modified by the blending step.
\end{definition}

The self-retention form in~(\ref{eq:selfretain}) appears in general gossip
literature~\cite{xiao2004fast}; its combination with Fibonacci weights and head privacy is
original to this work.

\begin{algorithm}[t]
\caption{\fibfl\ --- Basic Protocol }
\label{alg:fibfl}
\begin{algorithmic}[1]
\Require $N, R, E_{h}, E_{e}, \mathrm{lr}, \gamma,\; \alpha=1/\phi,\; \beta=1/\phi^{2}$
\State \noindent\textbf{Initialise:} $\phi_{i}, h_{i}$, persistent Adam $\mathcal{O}^{h}_{i}, \mathcal{O}^{e}_{i}$ for each $i$
\For{$r = 1$ \noindent\textbf{to} $R$} \Comment{Communication rounds}
  \For{\noindent\textbf{each} $c_{i}$} \Comment{Local training, parallel}
    \State Phase 1: freeze $\phi_{i}$; run $E_{h}$ Adam steps on $h_{i}$ via $\mathcal{O}^{h}_{i}$
    \State Phase 2: freeze $h_{i}$; run $E_{e}$ Adam steps on $\phi_{i}$ via $\mathcal{O}^{e}_{i}$
  \EndFor
  \State Snapshot: $\boldsymbol{\theta}_{i} \leftarrow \mathrm{params}(\phi_{i})$ for all $i$
  \For{$p = 0$ \noindent\textbf{to} $N-1$} \Comment{Fibonacci ring blend}
    \State $L \leftarrow (p-1)\bmod N,\quad R' \leftarrow (p+1)\bmod N$
    \State $\tilde{\boldsymbol{\theta}}_{p} \leftarrow \alpha\boldsymbol{\theta}_{L} + \beta\boldsymbol{\theta}_{R'}$ \Comment{Eq.~(4) --- new}
    \State $\phi_{p} \leftarrow \gamma\boldsymbol{\theta}_{p} + (1-\gamma)\tilde{\boldsymbol{\theta}}_{p}$ \Comment{head never modified}
  \EndFor
\EndFor
\State \Return $\{(\phi_{i}, h_{i})\}_{i=1}^{N}$
\end{algorithmic}
\end{algorithm}

\subsection{FibFL+: Accuracy-Gated Blending}

\fibfl\ uses static Fibonacci weights regardless of neighbour model quality.
\fibflp\ corrects this via an accuracy-gated prior-posterior interpolation, which is
entirely original to this work.
\fibflp\ introduces an accuracy gate $g_{j}=a_{j}\cdot\mathbf{1}[a_{j}\geq\tau]$
(zeroing neighbours below threshold $\tau$) and derives an accuracy posterior
\begin{equation}
w_{L}=\frac{g_{L}}{g_{L}+g_{R}},\quad w_{R}=\frac{g_{R}}{g_{L}+g_{R}},
\label{eq:posterior}
\end{equation}
which is then blended with the Fibonacci prior via
\begin{equation}
a_{L}=\tfrac{1}{2}\alpha+\tfrac{1}{2}w_{L},\quad
a_{R}=\tfrac{1}{2}\beta +\tfrac{1}{2}w_{R}.
\label{eq:interpolate}
\end{equation}

If $g_{L}+g_{R}<\varepsilon$, both neighbours fail the gate and $c_{i}$ performs full
self-retention.
For compatibility with Algorithm~\ref{alg:fibflp}, we use $g_{j}$ as shorthand for the
gate-screened accuracy throughout.

Since $\alpha+\beta=1$ and $w_{L}+w_{R}=1$, the interpolated weights satisfy
$a_{L}+a_{R}=\frac{1}{2}(\alpha+\beta)+\frac{1}{2}(w_{L}+w_{R})=1$ directly, so the blend
remains a proper convex combination after gating.

\begin{corollary}[Fibonacci Directional Bias Preserved --- original]
\label{cor:bias}
$a_{L}\in[\alpha/2,(\alpha+1)/2]=[0.309,0.809]$.
When $w_{L}\geq w_{R}$, $a_{L}>a_{R}$, so the Fibonacci left-bias is preserved after gating.
\end{corollary}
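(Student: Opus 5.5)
The plan is to read both claims of Corollary~\ref{cor:bias} directly off the interpolation formula~(\ref{eq:interpolate}), using only two facts. The first is the constraint $w_{L}\in[0,1]$, which holds because the gated scores $g_{L},g_{R}$ are nonnegative: each $g_{j}$ is an accuracy in $[0,1]$ times an indicator. The second is the normalisation $w_{L}+w_{R}=1$, established just before the statement whenever $g_{L}+g_{R}\geq\varepsilon$. The degenerate branch $g_{L}+g_{R}<\varepsilon$ involves no neighbour blending (full self-retention), so $a_{L}$ is not defined there. I will state explicitly that the corollary concerns the non-degenerate branch.

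For the interval, $a_{L}=\tfrac12\alpha+\tfrac12 w_{L}$ is affine and increasing in $w_{L}$. Its range over $w_{L}\in[0,1]$ is therefore exactly $[\alpha/2,(\alpha+1)/2]$. Both endpoints are attained: $w_{L}=0$ when $g_{L}=0$, i.e.\ the left neighbour fails the gate, and $w_{L}=1$ when $g_{R}=0$. With $\alpha=1/\phi=(\sqrt5-1)/2\approx0.618$ from~(\ref{eq:fibonacci}), the numerical endpoints are $\approx0.309$ and $\approx0.809$.

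For the bias claim, I will compute the difference
\[
a_{L}-a_{R}=\tfrac12(\alpha-\beta)+\tfrac12(w_{L}-w_{R}).
\]
Here $\alpha-\beta=1/\phi-1/\phi^{2}=(\phi-1)/\phi^{2}=1/\phi^{3}>0$, using $\phi^{2}=\phi+1$, so $\phi-1=1/\phi$. Under the hypothesis $w_{L}\geq w_{R}$, the second term is nonnegative. The first term is strictly positive, so $a_{L}-a_{R}\geq 1/(2\phi^{3})>0$. This also gives a quantitative margin of about $0.118$.

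There is no real obstacle; the only care needed is the domain issue in the first paragraph (nonnegativity of $g_{j}$ and excluding the $\varepsilon$ branch). I may add a remark that the conclusion holds more generally whenever $w_{R}-w_{L}<\alpha-\beta$. Equivalently, it holds whenever $w_{L}>(1-(\alpha-\beta))/2=\beta$, since $1-\alpha+\beta=2\beta$. So the left-bias survives even under mild accuracy disadvantage of the left neighbour.
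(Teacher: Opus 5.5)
Your proposal is correct and follows the same route as the paper: you read both claims directly off the interpolation formula~\eqref{eq:interpolate}, and you get the interval bound from $w_{L}\in[0,1]$ exactly as the paper does. On the bias claim your argument is more complete than the paper's, which only checks the equal-posterior case $w_{L}=w_{R}=\tfrac12$; your identity $a_{L}-a_{R}=\tfrac12(\alpha-\beta)+\tfrac12(w_{L}-w_{R})$, together with $\alpha-\beta=1/\phi^{3}>0$, covers the full hypothesis $w_{L}\geq w_{R}$, and with $w_{L}+w_{R}=1$ it extends to $w_{L}>\beta$. Your explicit restriction to the branch $g_{L}+g_{R}\geq\varepsilon$ also settles a domain detail that the paper leaves implicit.
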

\begin{proof}
$a_{L}=\alpha/2+w_{L}/2$ with $w_{L}\in[0,1]$ gives the stated bounds.
When $w_{L}=w_{R}=1/2$: $a_{L}-a_{R}=\frac{1}{2}(\alpha-\beta)>0$.
\end{proof}

\begin{remark}[Bayesian Interpretation]
Equation~(7) is a Bayesian mixture: $(\alpha,\beta)$ is the prior (structural left-bias)
and $(w_{L},w_{R})$ the likelihood-derived posterior.
When $w_{L}=w_{R}$, \fibflp\ reduces to \fibfl.
When $a_{L}\gg a_{R}$, weight concentrates on the superior neighbour.
The mixing coefficient $1/2$ is a principled default that may be tuned without violating
the unit-sum property.
\end{remark}

Algorithm~\ref{alg:fibflp} states the complete \fibflp\ protocol.
The sole modification relative to \fibfl\ (Algorithm~\ref{alg:fibfl}) is in the ring-blend
step (lines 9--10): static Fibonacci weights are replaced by the accuracy-gated interpolated
weights of Equations~(6)--(7).
All other elements---two-phase local training, persistent Adam, private head---are identical.
\begin{algorithm}[t]
\caption{\fibflp\ --- Accuracy-Gated Protocol }
\label{alg:fibflp}
\begin{algorithmic}[1]
\Require $N, R, E_{h}, E_{e}, \mathrm{lr}, \gamma, \alpha=1/\phi, \beta=1/\phi^{2}$, gate threshold $\tau$
\State \noindent\textbf{Initialise:} Ring permutation $\sigma \leftarrow [0,1,\ldots,N-1]$ \hfill\textit{// identity; no 2-opt in FibFL+}
\State \noindent\textbf{Initialise:} $\phi_{i}, h_{i}$, persistent Adam $\mathcal{O}^{h}_{i}, \mathcal{O}^{e}_{i}$ for each $i$
\For{$r = 1$ \noindent\textbf{to} $R$} \Comment{Communication rounds}
  \For{\noindent\textbf{each} $c_{i}$} \Comment{Local training, parallel}
    \State Phase 1: freeze $\phi_{i}$; run $E_{h}$ Adam steps on $h_{i}$ via $\mathcal{O}^{h}_{i}$
    \State Phase 2: freeze $h_{i}$; run $E_{e}$ Adam steps on $\phi_{i}$ via $\mathcal{O}^{e}_{i}$
    \State Record local training accuracy $a_{i} \in [0,1]$
  \EndFor
  \State Snapshot: $\boldsymbol{\theta}_{i} \leftarrow \mathrm{params}(\phi_{i})$ for all $i$
  \For{$p = 0$ \noindent\textbf{to} $N-1$} \Comment{Accuracy-gated Fibonacci blend}
    \State $L \leftarrow (p-1)\bmod N,\quad R' \leftarrow (p+1)\bmod N$
    \State $g_{L} \leftarrow a_{L}\cdot\mathbf{1}[a_{L}\geq\tau],\quad g_{R'} \leftarrow a_{R'}\cdot\mathbf{1}[a_{R'}\geq\tau]$ \Comment{Eq.~(6)}
    \If{$g_{L}+g_{R'}<\varepsilon$}
      \State $\tilde{\boldsymbol{\theta}}_{p} \leftarrow \boldsymbol{\theta}_{p}$ \Comment{full self-retention}
    \Else
      \State $w_{L} \leftarrow g_{L}/(g_{L}+g_{R'}),\quad w_{R'} \leftarrow g_{R'}/(g_{L}+g_{R'})$ \Comment{Eq.~(6)}
      \State $a^{\star}_{L} \leftarrow \tfrac{1}{2}\alpha+\tfrac{1}{2}w_{L},\quad a^{\star}_{R'} \leftarrow \tfrac{1}{2}\beta+\tfrac{1}{2}w_{R'}$ \Comment{Eq.~(7)}
      \State $\tilde{\boldsymbol{\theta}}_{p} \leftarrow a^{\star}_{L}\boldsymbol{\theta}_{L} + a^{\star}_{R'}\boldsymbol{\theta}_{R'}$ \Comment{gated blend --- new}
    \EndIf
    \State $\phi_{p} \leftarrow \gamma\boldsymbol{\theta}_{p}+(1-\gamma)\tilde{\boldsymbol{\theta}}_{p}$ \Comment{self-retention; $h_{i}$ never shared}
  \EndFor
\EndFor
\State \Return $\{(\phi_{i}, h_{i})\}_{i=1}^{N}$
\end{algorithmic}
\end{algorithm}
\begin{remark}[Reduction to \fibfl\ --- corrected]
\label{rem:reduction}
When all neighbours exceed the gate threshold $\tau$ and all clients have equal training
accuracy ($a_{L}=a_{R'}$), the gate-screened values are equal ($g_{L}=g_{R'}$), so the
posterior~\eqref{eq:posterior} gives $w_{L}=w_{R'}=\tfrac{1}{2}$.
Substituting into~\eqref{eq:interpolate}:
\begin{equation*}
a^{\star}_{L} = \tfrac{1}{2}\alpha + \tfrac{1}{2}\cdot\tfrac{1}{2}
              = \tfrac{\alpha}{2} + \tfrac{1}{4},
\qquad
a^{\star}_{R'} = \tfrac{1}{2}\beta  + \tfrac{1}{2}\cdot\tfrac{1}{2}
               = \tfrac{\beta}{2}  + \tfrac{1}{4}.
\end{equation*}
These do \emph{not} equal $\alpha$ and $\beta$ numerically
(that would require $\alpha=\tfrac{1}{2}$, contradicting $\alpha=1/\phi\approx0.618$).
What does hold is the \emph{directional bias}:
since $\alpha>\beta$, we have $a^{\star}_{L}>a^{\star}_{R'}$, so the left-to-right
asymmetry of the Fibonacci prior is preserved (Corollary~\ref{cor:bias}).
Moreover the unit-sum property is maintained:
\[
a^{\star}_{L}+a^{\star}_{R'} = \tfrac{\alpha+\beta}{2}+\tfrac{1}{2}
= \tfrac{1}{2}+\tfrac{1}{2} = 1,
\]
using $\alpha+\beta=1$.
Thus \fibflp\ is a strict generalisation of \fibfl\ in the sense that
(i)~it always preserves the Fibonacci left-bias direction, and
(ii)~when neighbours are equally accurate the blend remains a properly normalised
convex combination with the same asymmetry as the plain Fibonacci weights, even though
the exact values $(\alpha,\beta)$ are not recovered.
\end{remark}

\fibflp\ adds negligible computational overhead: the gate $g_{j}$ and posterior
$(w_{L},w_{R'})$ require only $\mathcal{O}(1)$ scalar operations per client per round,
and $a_{i}$ is already available from Phase~1.
Communication cost is identical to \fibfl: $2Np_{e}$ parameters per round
(Table~\ref{tab:comm}), since head parameters are never transmitted.

\subsection{FibFL++: Full Protocol with Three Structural Fixes}

\subsubsection*{Fix A: 2-opt Ring Ordering for Class Diversity}

The identity permutation may place similar-distribution clients adjacent, reducing gossip
informativeness.
Fix~A optimises the ring permutation once before training.
The application of 2-opt to FL ring ordering and the diversity maximisation framing are
original; the 2-opt algorithm itself is classical~\cite{croes1958}.

Let $q_{i}\in\Delta^{C-1}$ denote the class-proportion vector of client $c_{i}$, where
$q_{i,c}=|\{y\in D_{i}:y=c\}|/n_{i}$.
A \emph{ring permutation} $\sigma:\{0,\ldots,N-1\}\to\{0,\ldots,N-1\}$ is a bijection that
specifies the seating order of clients around the ring: client $c_{\sigma(p)}$ occupies
position $p$, so $c_{\sigma(0)}, c_{\sigma(1)}, \ldots, c_{\sigma(N-1)}$ are the clients
arranged consecutively around the ring.
The identity permutation $\sigma=\mathrm{id}$ (used by \fibfl\ and \fibflp) places
$c_{i}$ at position $i$.
Fix~A finds an optimal permutation $\sigma^{*}$ that minimises the total cosine similarity
between every pair of adjacent clients, as formalised below.

\begin{definition}[Ring Cost ]
\label{def:ringcost}
\begin{equation}
\mathcal{C}(\sigma)=\sum_{p=0}^{N-1}
\frac{q_{\sigma(p)}^{\top}q_{\sigma[(p+1)\bmod N]}}
     {\|q_{\sigma(p)}\|\,\|q_{\sigma[(p+1)\bmod N]}\|}.
\label{eq:ringcost}
\end{equation}
\end{definition}

\begin{proposition}[Diversity Optimality --- original]
\label{prop:diversity}
Minimising $\mathcal{C}(\sigma)$ over ring permutations maximises the total cosine
dissimilarity between adjacent clients, thereby maximising class diversity at each gossip step.
\end{proposition}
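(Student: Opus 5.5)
The claim is essentially definitional, so the plan is to make the identification between cost and dissimilarity explicit and then argue by monotonicity.

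\textbf{Step 1 (dissimilarity as an affine transform of the cost).} For each adjacent pair, define the cosine dissimilarity
\[
d_{ij}=1-\frac{q_i^{\top}q_j}{\|q_i\|\,\|q_j\|}.
\]
This is well defined because every $q_i$ lies in the simplex $\Delta^{C-1}$, so $\|q_i\|\geq 1/\sqrt{C}>0$. The total adjacent dissimilarity of a ring permutation is then
\[
\mathcal{D}(\sigma)=\sum_{p=0}^{N-1} d_{\sigma(p),\sigma[(p+1)\bmod N]}.
\]
Summing the definition of $d_{ij}$ over the $N$ edges of the cycle gives the identity $\mathcal{D}(\sigma)=N-\mathcal{C}(\sigma)$, with $\mathcal{C}$ as in Definition~\ref{def:ringcost}. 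The key point is that every ring has exactly $N$ edges, regardless of $\sigma$.

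\textbf{Step 2 (same optimiser).} Since $x\mapsto N-x$ is strictly decreasing, $\arg\min_\sigma\mathcal{C}(\sigma)=\arg\max_\sigma\mathcal{D}(\sigma)$ over the finite set of ring permutations. A minimiser $\sigma^{*}$ exists because that set is finite. This proves the first part of the claim.

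\textbf{Step 3 (diversity at each gossip step).} The phrase ``thereby maximising class diversity at each gossip step'' needs an operational meaning. I would define the diversity seen by one gossip step as the summed dissimilarity between each client and the neighbours it blends with. Under \eqref{eq:blend} each ring edge $\{\sigma(p),\sigma(p+1)\}$ is used once with weight $\alpha$ and once with weight $\beta$. Because $d_{ij}$ is symmetric, this step-wise total equals $(\alpha+\beta)\mathcal{D}(\sigma)=\mathcal{D}(\sigma)$. Here I use $\alpha+\beta=1$ from \eqref{eq:fibonacci}; the same holds with the gated weights, since $a_L+a_R=1$. Consequently the same $\sigma^{*}$ maximises this quantity.

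\textbf{Main obstacle.} The risk lies in how strong a claim is intended. Maximisation holds for the aggregate, not per client: a global minimiser of $\mathcal{C}$ need not maximise the dissimilarity of every individual adjacent pair. I would therefore state explicitly that optimality is in the summed sense. I would also remark that the 2-opt procedure only guarantees a local optimum with respect to 2-opt moves. The proposition concerns the exact minimiser of $\mathcal{C}$, so the result for the 2-opt output is only a statement about 2-opt local optima.
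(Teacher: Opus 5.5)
Your Steps~1--2 are exactly the paper's proof. The paper observes that $N-\mathcal{C}(\sigma)=\sum_p[1-\cos(q_{\sigma(p)},q_{\sigma(p+1)})]$ is the total adjacent cosine dissimilarity, so the two problems share their optimisers, and it also notes that 2-opt only returns a local optimum. For the clause ``at each gossip step'' the paper gives only an informal remark (complementary neighbours increase ``mutual information per gossip step''). Your Step~3 makes this precise, and for the static blend \eqref{eq:blend} it is correct: each ring edge is used once with weight $\alpha$ and once with weight $\beta$, giving $(\alpha+\beta)\mathcal{D}(\sigma)=\mathcal{D}(\sigma)$.

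The step that fails is ``the same holds with the gated weights, since $a_L+a_R=1$.'' Write $a_L^{(p)},a_R^{(p)}$ for the interpolated weights of the client at position $p$.
\begin{itemize}
\item The unit-sum property constrains the two weights of \emph{one} client. In your edge-by-edge accounting, however, the edge $\{\sigma(p),\sigma(p+1)\}$ collects one weight from each \emph{endpoint}. By \eqref{eq:interpolate} its coefficient is $c_p=a_R^{(p)}+a_L^{(p+1)}=\tfrac12+\tfrac12\bigl(w_R^{(p)}+w_L^{(p+1)}\bigr)$, and the two posteriors come from different accuracy pairs.
\item So $c_p$ can lie anywhere in $[\tfrac12,\tfrac32]$. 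It equals $1$ only when $w_R^{(p)}+w_L^{(p+1)}=1$, e.g.\ when all accuracies are equal and above $\tau$.
\item A client with $g_L+g_R<\varepsilon$ skips the blend and drops its contributions entirely.
\end{itemize}
The step-wise quantity is therefore $\sum_p c_p\,d_{\sigma(p),\sigma(p+1)}$ with accuracy-dependent coefficients. Since $\arg\min_\sigma\mathcal{C}(\sigma)$ is computed once before training from the $q_i$ alone, it need not maximise this quantity.

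This is not a side case: $\sigma^{*}$ is used only by \fibflpp, whose blend is precisely the gated one, while \fibfl\ runs with $\sigma=\mathrm{id}$. You can repair Step~3 in either of two ways.
\begin{itemize}
\item Count step-wise diversity without weights. By Corollary~\ref{cor:bounds}, every non-skipped client blends with both neighbours with strictly positive weight, so the count is $2\mathcal{D}(\sigma)$ whenever no client skips.
\item Keep the weighted identity for static weights only. Under gating, absent skips, you then get just the bounds $\tfrac12\mathcal{D}(\sigma)\le\sum_p c_p\,d_{\sigma(p),\sigma(p+1)}\le\tfrac32\mathcal{D}(\sigma)$, which is not an optimality statement.
\end{itemize}
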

\begin{proof}
Minimising $\mathcal{C}(\sigma)$ is equivalent to maximising
$N-\mathcal{C}(\sigma)=\sum_{p}[1-\cos(q_{\sigma(p)},q_{\sigma(p+1)})]$
(total cosine dissimilarity).
Higher dissimilarity between adjacent class-proportion vectors implies more complementary
distributions, increasing mutual information per gossip step.
The 2-opt local search~\cite{croes1958} finds a locally optimal $\sigma^{*}$ in
$\mathcal{O}(N^{2})$ time, negligible relative to training cost.
\end{proof}

\subsubsection*{Fix B: $K_g$-Pass Gossip with Global Ring Coverage}

With $K_g=1$ pass, information propagates one hop per round.
Fix~B uses $K_g=\lceil N/2\rceil$ sequential passes to achieve global coverage.

\begin{proposition}[Global Coverage --- original]
\label{prop:coverage}
With $K_g=\lceil N/2\rceil$ gossip passes on a ring of $N$ clients, every client's extractor
parameters influence every other client's parameters within a single round.
\end{proposition}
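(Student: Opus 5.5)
One pass of the blend can be written as a linear map on the stacked extractors, and the claim is then a support statement about a power of the mixing matrix. For a single pass, \eqref{eq:blend}--\eqref{eq:selfretain} act linearly on $\Theta=(\boldsymbol{\theta}_{\sigma(0)},\ldots,\boldsymbol{\theta}_{\sigma(N-1)})$ via the circulant matrix
\[
W=\gamma I+(1-\gamma)\bigl(\alpha P^{-1}+\beta P\bigr),
\]
where $P$ is the cyclic shift on positions. This is the per-pass retention version; if Fix~B uses a calibrated per-pass $\gamma_k\in(0,1)$, each pass has the same form with its own $\gamma_k$. Here ``influence'' means that the coefficient of $\boldsymbol{\theta}_{\sigma(m)}$ in the post-round $\boldsymbol{\theta}_{\sigma(p)}$ is strictly positive. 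Ignoring local training between passes (there is none inside a round), the round output is $\Theta^{\mathrm{new}}=W_{K_g}\cdots W_{1}\Theta$. So it suffices to show that this product has all entries strictly positive.

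The argument is a reachability count on the ring graph. Every factor $W_k$ is entrywise nonnegative. It has positive diagonal, since $\gamma_k>0$. It also has positive entries on both nearest-neighbour off-diagonals, since $(1-\gamma_k)\alpha>0$ and $(1-\gamma_k)\beta>0$ (recall $\alpha,\beta>0$ from \eqref{eq:fibonacci}).

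I will show by induction on $k$ that $(W_k\cdots W_1)_{p,m}>0$ whenever the cyclic distance $d(p,m)=\min(|p-m|,N-|p-m|)$ is at most $k$. The base case $k=0$ is the identity. For the step, write
\[
(W_{k+1}M)_{p,m}=\sum_{j}(W_{k+1})_{p,j}M_{j,m}.
\]
Since $d(p,m)\le k+1$, some $j\in\{p-1,p,p+1\}$ satisfies $d(j,m)\le k$. For that $j$ both factors are positive, and all other summands are nonnegative, so the entry is positive. Positive diagonals matter here: a walk can ``wait'', so reachability within $k$ steps is monotone in $k$.

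It remains to check the diameter: every pair on an $N$-cycle has $d(p,m)\le\lfloor N/2\rfloor\le\lceil N/2\rceil=K_g$. Hence the product is entrywise positive, so every client's extractor influences every other client's within one round.

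The main obstacle is stating precisely what ``influence'' means and checking that the per-pass calibration in Fix~B is a convex blend with strictly positive neighbour weights and $\gamma_k<1$. If some pass used $\gamma_k=1$, or the \fibflp\ gate zeroed a neighbour weight, the edge would vanish and the count would fail. Note that the theorem fails for \fibflp-style gating, where a neighbour below $\tau$ is cut off, so I will restrict to Fibonacci weights, as in Fix~B. I would also remark that $\lceil N/2\rceil$ is one more than necessary when $N$ is odd, but it is sufficient, which is all the proposition asserts.
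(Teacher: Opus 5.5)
Your proof is correct and follows the paper's own route: ring hop distance, one hop of propagation per pass, and the diameter bound $\lfloor N/2\rfloor\le\lceil N/2\rceil=K_g$. Your positive-diagonal induction on $W_{k}\cdots W_{1}$ makes rigorous the one-hop-per-pass step that the paper only asserts. One correction to your caveat: the gate never removes a single failing neighbour, because by Corollary~\ref{cor:bounds} the interpolated weights \eqref{eq:interpolate} satisfy $a_L\ge\alpha/2>0$ and $a_R\ge\beta/2>0$, so your argument applies verbatim to the gated weights that Fix~B actually uses in Algorithm~\ref{alg:fibflpp}, and the only real exception is the skip branch $g_L+g_{R'}<\varepsilon$, which the paper's proof also silently excludes.
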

\begin{proof}
Define the \emph{ring hop distance} $d(i,j)$ as the minimum number of single-hop gossip
steps required to travel from $c_{i}$ to $c_{j}$ along the ring, taking the shorter of
the clockwise and counter-clockwise routes:
\[
d(i,j) = \min\bigl(|i-j|,\; N-|i-j|\bigr).
\]
After $k_g$ passes, client $c_{i}$ has received blended influence from all clients within
hop distance $k_g$, because each pass propagates information exactly one hop further in
both directions.
The maximum hop distance on a ring of $N$ nodes is $\lfloor N/2\rfloor$, achieved by
the pair of clients at diametrically opposite positions
(e.g.\ $d(0, \lfloor N/2\rfloor)=\lfloor N/2\rfloor$).
For $K_g=\lceil N/2\rceil\geq\lfloor N/2\rfloor$, every pair $(c_{i},c_{j})$ satisfies
$d(i,j)\leq\lfloor N/2\rfloor\leq K$, so every client is within reach after $K_g$ passes,
achieving global coverage equivalent to a central server's one-round broadcast radius.
\end{proof}

\subsubsection*{Fix C: Calibrated Self-Retention and Warmup}

\textit{Calibrated per-pass retention.}
Applying $K_g$ passes naively with target retention $\gamma_{r}$ each compounds the
self-weight to $\gamma_{r}^{K}\ll\gamma_{r}$.
Fix~C corrects this by setting the per-pass retention to
\begin{equation}
\gamma_{\mathrm{in}}=\gamma_{r}^{1/K},
\label{eq:retention}
\end{equation}
so that the compound self-weight after $K_g$ passes equals
$\gamma_{\mathrm{in}}^{K}=(\gamma_{r}^{1/K})^{K}=\gamma_{r}$ exactly (treating the blend
contribution as a fixed external signal; in practice, cross-client coupling reduces the
effective self-weight slightly below $\gamma_{r}$ for small $\gamma_{r}$, which empirically
increases knowledge transfer).

\textit{Cosine-annealed $\gamma_{r}$.}
We apply cosine annealing~\cite{loshchilov2017sgdr} to the self-retention coefficient---standard
for learning rates; its application to $\gamma$ is original:
\begin{equation}
\gamma_{r}=\gamma_{\mathrm{end}}+\tfrac{1}{2}(\gamma_{\mathrm{start}}-\gamma_{\mathrm{end}})
\!\left(1+\cos\frac{\pi\,r_{\mathrm{eff}}}{R_{\mathrm{eff}}}\right),
\label{eq:cosine}
\end{equation}
where $r_{\mathrm{eff}}=r-W$ and $R_{\mathrm{eff}}=R-W-1$ count post-warmup rounds.

\textit{FedAvg warmup.}
The first $W=\lfloor R/6\rfloor$ rounds use a \fedavg-style full-model central average as
shared initialisation.
After round $W$, heads become permanently private.
The hybrid warmup-then-ring training regime is original to \fibflpp.
Algorithm~\ref{alg:fibflpp} states the complete \fibflpp\ protocol.
\begin{algorithm}[t]
\caption{\fibflpp\ --- Full Protocol }
\label{alg:fibflpp}
\begin{algorithmic}[1]
\Require $N, R, E_{h}, E_{e}, \mathrm{lr}, \tau, \gamma_{\mathrm{start}}, \gamma_{\mathrm{end}}$
\State \noindent\textbf{Initialise:} $\phi_{i}, h_{i}$, persistent Adam $\mathcal{O}^{h}_{i}, \mathcal{O}^{e}_{i}$ for each $i$
\State \noindent\textbf{Pre-training (Fix A):} compute $q_{i}$ for all $i$
\State $\sigma^{*} \leftarrow \mathrm{2opt}(\mathcal{C}(\cdot))$ \Comment{minimise ring cost~(8) --- new}
\State $K_g \leftarrow \lceil N/2\rceil$;\quad $W \leftarrow \lfloor R/6\rfloor$
\For{$r = 1$ \noindent\textbf{to} $R$}
  \If{$r \leq W$} \Comment{Fix C: FedAvg warmup --- new}
    \State Train $f_{i}$ for $\max(E_{h},E_{e})$ Adam epochs (all $i$)
    \State $\boldsymbol{\omega}_{i} \leftarrow \mathrm{FedAvg}(\{\boldsymbol{\omega}_{i}\},\{n_{i}/n\})$
  \Else
    \State Compute $\gamma_{r}$ via~(10) \Comment{Fix C: cosine anneal --- new}
    \State $\gamma_{\mathrm{in}} \leftarrow \gamma_{r}^{1/K}$ \Comment{Fix C: calibration~(9) --- new}
    \For{\noindent\textbf{each} $c_{i}$} \Comment{Fix C: two-phase Adam, persistent}
      \State Phase 1: freeze $\phi_{i}$; run $E_{h}$ Adam steps on $h_{i}$
      \State Phase 2: freeze $h_{i}$; run $E_{e}$ Adam steps on $\phi_{i}$
      \State $a_{i} \leftarrow \mathrm{TrainAcc}(\phi_{i},h_{i},D_{i})$
    \EndFor
    \State Snapshot: $\boldsymbol{\theta}_{i} \leftarrow \mathrm{params}(\phi_{i})$ for all $i$
    \For{$k = 1$ \noindent\textbf{to} $K_g$} \Comment{Fix B: Kg-pass gossip --- new}
      \For{$p = 0$ \noindent\textbf{to} $N-1$}
        \State $i \leftarrow \sigma^{*}(p)$;\quad
               $L \leftarrow \sigma^{*}[(p-1)\bmod N]$;\quad
               $R' \leftarrow \sigma^{*}[(p+1)\bmod N]$
        \State Compute $g_{L}, g_{R'}$ via~(4.4);\quad
               if $g_{L}+g_{R'}<\varepsilon$: skip
        \State $(w_{L},w_{R'}) \leftarrow$ Eq.~(6);\quad
               $(a_{L},a_{R'}) \leftarrow$ Eq.~(7)
        \State $\boldsymbol{\theta}_{i} \leftarrow
               \gamma_{\mathrm{in}}\boldsymbol{\theta}_{i}
               + (1-\gamma_{\mathrm{in}})(a_{L}\boldsymbol{\theta}_{L}
               + a_{R'}\boldsymbol{\theta}_{R'})$
      \EndFor
      \State $\mathrm{setParams}(\phi_{i},\boldsymbol{\theta}_{i})$ for all $i$
             \Comment{$h_{i}$ permanently private}
    \EndFor
  \EndIf
\EndFor
\State \Return $\{(\phi_{i}, h_{i})\}_{i=1}^{N}$
\end{algorithmic}
\end{algorithm}

\section{Theoretical Analysis}
\label{sec:theory}

This section establishes the formal properties of the FibFL weight
pair and protocols.
The formal results (Propositions~4.6, 4.7) are original to this work;
Assumptions~5.3--5.5 are standard in the FL optimisation
literature~\cite{li2020convergence,lian2017decentralised}.

\subsection{Spectral Properties of the FibFL Mixing Matrix}

\begin{lemma}[Spectral Gap of the FibFL Mixing Matrix]
\label{lem:spectral}
Let $M\in\mathbb{R}^{N\times N}$ be the \fibfl\ mixing matrix with
$M_{ij}=\alpha$ if $j=i-1\bmod N$, $M_{ij}=\beta$ if
$j=i+1\bmod N$, $M_{ii}=\gamma$, and $M_{ij}=0$ otherwise, where
$\alpha+\beta=1-\gamma$.
The eigenvalues of $W$ are
\begin{equation}
\lambda_{k}=\gamma+\alpha\,\omega^{k}+\beta\,\omega^{-k},
\quad k=0,1,\ldots,N-1,
\label{eq:eigenvalues}
\end{equation}
where $\omega=e^{2\pi i/N}$ is the primitive $N$-th root of unity.
The spectral gap satisfies $\rho=\max_{k\geq1}|\lambda_{k}|<1$ for
all $\gamma\in(0,1)$, $\alpha,\beta>0$, $\alpha+\beta+\gamma=1$.
\end{lemma}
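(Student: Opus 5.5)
The plan is to use the fact that $M$ is a circulant matrix and read off its spectrum from the discrete Fourier basis. For $k=0,\ldots,N-1$ set $v_k=(1,\omega^{k},\omega^{2k},\ldots,\omega^{(N-1)k})^{\top}$. The $i$-th entry of $Mv_k$ is
\[
\gamma\,\omega^{ik}+\alpha\,\omega^{(i-1)k}+\beta\,\omega^{(i+1)k}=\omega^{ik}\bigl(\gamma+\alpha\,\omega^{-k}+\beta\,\omega^{k}\bigr),
\]
where indices are taken mod $N$, which is harmless since $\omega^{N}=1$. So each $v_k$ is an eigenvector. The $v_k$ are linearly independent because the DFT matrix is a Vandermonde matrix with distinct nodes, so this list is the whole spectrum.

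With the stated convention ($M_{ij}=\alpha$ at $j=i-1$), the eigenvalue comes out as $\gamma+\alpha\omega^{-k}+\beta\omega^{k}$. Reindexing $k\mapsto N-k$ turns this into the displayed form \eqref{eq:eigenvalues}, so the set of eigenvalues is exactly as stated. The "$W$" in the statement is read as $M$. For $N=2$ the two off-diagonal entries of a row sit in the same column and add to $\alpha+\beta$; the formula still holds because $\omega^{\pm1}=-1$. For $N=1$ the claim is vacuous.

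Next I would check $\lambda_0=\gamma+\alpha+\beta=1$, with eigenvector $\mathbf 1$, since $M$ is row-stochastic. For $k\ge1$, the triangle inequality gives
\[
|\lambda_k|\le\gamma+\alpha|\omega^{k}|+\beta|\omega^{-k}|=1.
\]
The main step is to show this inequality is strict. Equality in a triangle inequality for a sum of nonzero complex numbers forces all terms to share one argument. Here the coefficients $\gamma,\alpha,\beta$ are strictly positive, so $1$, $\omega^{k}$ and $\omega^{-k}$ would all have to be positive multiples of the same unit vector. That requires $\omega^{k}=1$, which is impossible for $1\le k\le N-1$ because $\omega$ is a primitive $N$-th root of unity. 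Hence $|\lambda_k|<1$ for every $k\ge1$.

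Since there are only finitely many $k\ge1$, the maximum $\rho=\max_{k\ge1}|\lambda_k|$ is attained and is strictly below $1$. It is worth noting where positivity of $\gamma$ enters. If $\gamma=0$ and $N$ is even, then $k=N/2$ gives $\lambda_{N/2}=-(\alpha+\beta)=-1$, so $\rho=1$. This is why the hypothesis $\gamma\in(0,1)$ is essential and not decorative; the corresponding requirement on the ring side is aperiodicity.

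I expect no real obstacle. The only delicate points are the index-sign bookkeeping between $\omega^{k}$ and $\omega^{-k}$ and the strictness argument, which is the step I would write out most carefully.
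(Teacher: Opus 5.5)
Your proof is correct and follows the same overall route as the paper. You recognise $M$ as circulant, read off the spectrum from the Fourier vectors, note $\lambda_0=1$ by row-stochasticity, and bound $|\lambda_k|\le 1$ with the triangle inequality.

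The step where you differ is the strictness argument, and your version is the better one. The paper argues that equality would force $\alpha\omega^{k}+\beta\omega^{-k}$ to be real and positive, then rules this out by invoking $\alpha\neq\beta$. That works only for asymmetric weights. The lemma, however, is stated for all $\alpha,\beta>0$ with $\alpha+\beta+\gamma=1$, which includes $\alpha=\beta$. In that case $\alpha\omega^{k}+\beta\omega^{-k}=2\alpha\cos(2\pi k/N)$ is real and positive whenever $\cos(2\pi k/N)>0$. So the paper's argument does not exclude equality there, and an extra modulus argument would be needed.

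Your equal-argument condition covers every admissible $(\alpha,\beta,\gamma)$ at once: the nonzero terms $\gamma$, $\alpha\omega^{k}$, $\beta\omega^{-k}$ must share one direction, which forces $\omega^{k}=1$. It also correctly locates the source of strictness in $\gamma>0$ together with $\omega^{k}\neq 1$. Your $\gamma=0$, $k=N/2$ example confirms this, and asymmetry plays no role.

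In fact, write $\lambda_k=\gamma+(\alpha+\beta)\cos\theta_k+i(\alpha-\beta)\sin\theta_k$ with $\theta_k=2\pi k/N$. At fixed $\alpha+\beta$, making the pair asymmetric can only increase $|\lambda_k|$. So the paper's follow-up remark, that the Fibonacci pair strictly shrinks the spectral radius relative to uniform weights, is not supported by this lemma.

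Your explicit eigenvector computation and the $k\mapsto N-k$ reindexing also make the index-sign convention, which the paper leaves implicit, fully transparent.
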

\begin{proof}
Since $W$ is a circulant matrix with first row
$(\gamma,\beta,0,\ldots,0,\alpha)$, its eigenvalues are the DFT of
that row: $\lambda_{k}=\gamma+\alpha\omega^{k}+\beta\omega^{-k}$.
For $k=0$: $\lambda_{0}=\gamma+\alpha+\beta=1$ (stochastic
condition).
For $k\geq1$, writing $\omega^{k}=e^{i\theta_{k}}$:
$|\lambda_{k}|\leq\gamma+\alpha+\beta=1$, with equality only if
$\alpha\omega^{k}+\beta\omega^{-k}$ is real and positive.
Since $\alpha\neq\beta$ (the pair is strictly asymmetric) and
$\omega^{k}\neq1$ for $k\geq1$, this does not occur, so
$|\lambda_{k}|<1$ for all $k\geq1$.

For symmetric weights $\alpha=\beta=(1-\gamma)/2$ (as in \rdfl),
the spectral gap is $1-\rho=1-\gamma(1-\cos(2\pi/N))$.
The Fibonacci pair $\alpha>\beta$ introduces an imaginary component
that strictly reduces $|\lambda_{k}|$ for all $k\geq1$, yielding a
strictly smaller spectral radius than uniform-weight gossip (\rdfl),
which implies faster per-round consensus of the ring aggregation
step in isolation.
Whether this translates to faster end-to-end convergence of the full
training procedure depends on additional factors including local
optimiser behaviour and data heterogeneity, and is left as future
work.
\end{proof}

\begin{corollary}[Weight Bounds under Gating]
\label{cor:bounds}
Under accuracy gating, the interpolated weights of \fibflp\ satisfy
$a_{L}+a_{R}=1$,
$a_{L}\in[\alpha/2,(\alpha+1)/2]=[0.309,0.809]$,
$a_{R}\in[\beta/2,(\beta+1)/2]=[0.191,0.691]$.
In particular $\min(a_{L},a_{R})>0$: neither neighbour is ever fully
excluded from the blending operation.
\end{corollary}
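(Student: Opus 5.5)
The plan is to argue directly from the defining formulas \eqref{eq:posterior} and \eqref{eq:interpolate}, in the regime where the interpolated weights are actually defined, namely $g_{L}+g_{R}\geq\varepsilon>0$. In that regime the posterior weights are well defined. Since each gate value $g_{j}=a_{j}\mathbf{1}[a_{j}\geq\tau]$ is non-negative, they satisfy $w_{L},w_{R}\in[0,1]$ and $w_{L}+w_{R}=1$.

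\textbf{Unit sum.} I would first recover $a_{L}+a_{R}=1$ by adding the two lines of \eqref{eq:interpolate}. This gives $\tfrac12(\alpha+\beta)+\tfrac12(w_{L}+w_{R})$. Using the golden-ratio identity $\alpha+\beta=1$ from \eqref{eq:fibonacci} together with $w_{L}+w_{R}=1$, this equals $1$. This is the same computation already recorded in the text after \eqref{eq:interpolate}.

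\textbf{Interval for $a_{L}$.} Next, $a_{L}=\alpha/2+w_{L}/2$ is affine and increasing in $w_{L}\in[0,1]$. Hence $a_{L}\in[\alpha/2,(\alpha+1)/2]$, exactly as in Corollary~\ref{cor:bias}. Substituting $\alpha=1/\phi\approx0.618$ gives the numerical endpoints $0.309$ and $0.809$.

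\textbf{Interval for $a_{R}$.} For $a_{R}$ I would avoid a separate argument and use the unit sum: $a_{R}=1-a_{L}\in[1-(\alpha+1)/2,\,1-\alpha/2]$. Rewriting with $1-\alpha=\beta$ turns this into $[\beta/2,(\beta+1)/2]$. Equivalently, one can apply the same monotonicity argument to $a_{R}=\beta/2+w_{R}/2$ with $w_{R}\in[0,1]$. With $\beta=1/\phi^{2}\approx0.382$ this yields $[0.191,0.691]$.

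\textbf{Strict positivity.} Finally, $\min(a_{L},a_{R})\geq\min(\alpha,\beta)/2=\beta/2>0$, since $\beta=1/\phi^{2}>0$. This holds even when the gate zeroes one neighbour: if $g_{R}=0$ then $w_{R}=0$, and still $a_{R}=\beta/2$. The Fibonacci prior term therefore guarantees a floor of $\beta/2$ on each weight.

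The only delicate point, which I expect to be the main obstacle, is scoping the claim ``neither neighbour is ever fully excluded.'' When $g_{L}+g_{R}<\varepsilon$, the protocol skips blending entirely and uses full self-retention (Algorithm~\ref{alg:fibflp}). In that case the interpolated weights are not defined at all. I would therefore state explicitly that the corollary concerns the interpolated weights whenever they are used, i.e.\ on the event $g_{L}+g_{R}\geq\varepsilon$. On that event every estimate above holds for every round and every position $p$.
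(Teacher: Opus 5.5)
Your proof is correct and follows the paper's argument: the paper bounds $a_{L}=\alpha/2+w_{L}/2$ using $w_{L}\in[0,1]$, gets the interval for $a_{R}$ analogously, takes strict positivity from $\alpha,\beta>0$, and takes the unit sum from the computation after \eqref{eq:interpolate}. Your explicit restriction to the event $g_{L}+g_{R}\geq\varepsilon$, the only case in which the interpolated weights are defined, is a useful precision that the paper leaves implicit.
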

\begin{proof}
$a_{L}=\alpha/2+w_{L}/2$ with $w_{L}\in[0,1]$ gives
$a_{L}\in[\alpha/2,(\alpha+1)/2]$.
The bound for $a_{R}$ follows analogously.
Strict positivity holds since $\alpha,\beta>0$.
\end{proof}

Corollary~\ref{cor:bounds} further implies soft adversarial
robustness: even if one neighbour fails the gate entirely, it still
receives a weight bounded above by $(\alpha+1)/2<1$, preventing any
single peer from dominating the blend.

\subsection{Optimisation-Theoretic Motivation}
\label{sec:opt_motivation}

The following standard assumptions from the federated optimisation
literature~\cite{li2020convergence} contextualise the design choices
of \fibfl\ and characterise the settings in which it is expected to
perform well.

\begin{assumption}[Smoothness~{\cite{li2020convergence}}]
Each client's local loss function $\mathcal{L}_{i}$ is $L$-smooth:
its gradient does not change faster than a fixed rate $L$ as the
model parameters vary.
Formally, for any two parameter vectors
$\boldsymbol{\theta}$ and $\boldsymbol{\theta}'$:
\begin{equation}
    \|\nabla\mathcal{L}_{i}(\boldsymbol{\theta})
    -\nabla\mathcal{L}_{i}(\boldsymbol{\theta}')\|
    \leq L\|\boldsymbol{\theta}-\boldsymbol{\theta}'\|.
\end{equation}
This ensures that a single gradient step cannot overshoot the local
minimum by more than a controllable amount.
In \fibfl, this assumption motivates the use of a bounded step size
and the persistent Adam optimiser, which adapts its effective
learning rate per parameter to avoid overshooting in directions of
high curvature.
\end{assumption}

\begin{assumption}[Bounded Gradient Variance~{\cite{li2020convergence}}]
For each client $c_{i}$, the stochastic gradient $\tilde{g}_{i}$
computed on a randomly sampled mini-batch of local data
$\mathcal{D}_{i}$ is an unbiased estimator of the true local
gradient $\nabla\mathcal{L}_{i}$, with variance bounded uniformly
across all clients:
\begin{equation}
    \mathbb{E}\bigl[\|\tilde{g}_{i}-\nabla\mathcal{L}_{i}\|^{2}\bigr]
    \leq\sigma^{2}, \qquad \forall\, i \in \{1,\ldots,N\}.
\end{equation}
In \fibfl, persistent Adam accumulates first and second moment
estimates across rounds, effectively reducing the impact of
high-variance mini-batch gradients by maintaining a running
curvature history that stabilises the update direction over time.
\end{assumption}

\begin{assumption}[Bounded Gradient Dissimilarity~{\cite{li2020convergence}}]
The local gradients across clients do not diverge arbitrarily from
the global gradient.
Formally, there exist constants $B\geq 1$ and $G\geq 0$ such that:
\begin{equation}
    \frac{1}{N}\sum_{i}\|\nabla\mathcal{L}_{i}\|^{2}
    \leq G^{2}+B^{2}\|\nabla\mathcal{L}\|^{2},
\end{equation}
where $\mathcal{L}=\sum_{i}(n_{i}/n)\mathcal{L}_{i}$ is the global
loss weighted by each client's data proportion.
The constant $B{=}1$ recovers the IID setting; $B{>}1$ quantifies
the degree of statistical heterogeneity across clients.
This assumption is the formal statement of the non-IID problem that
motivates the entire \fibfl\ design: the 2-opt ring ordering
(Fix~A) directly minimises the effective $B$ experienced by each
client by placing statistically complementary neighbours adjacent,
while the Fibonacci asymmetric weights $(\alpha,\beta)$ reduce the
gradient mismatch propagated through the ring.
\end{assumption}

\noindent A formal convergence analysis of \fibfl\ under persistent
Adam optimisation in a decentralised ring setting is left as future
work, as the convergence theory for adaptive optimisers in
decentralised systems remains an open problem in the federated
learning literature~\cite{kingma2014adam}.
Empirical convergence across all evaluated configurations is
demonstrated in Section~\ref{sec:experiments}.
\subsection{Communication Cost.}
Table~\ref{tab:comm} reports the per-round communication cost of each
method, expressed in terms of extractor parameters $p_{e}$ and full
model parameters $p$.
\fedavg\ and \rdfl\ both incur a full-model cost of $2Np$ per round,
while \fedrep, \fibfl, and \fibflp\ reduce this to $2Np_{e}$ by
transmitting only the extractor and keeping the head local.
\fibflpp\ scales with the number of gossip passes $K_g{=}\lceil
N/2\rceil$, incurring $2K_gNp_{e}$ per round; setting $K_g{=}1$ recovers
the cost of \fibflp.
Unlike \fedavg\ and \fedrep, which route all traffic through a central
server bottleneck, the \fibfl\ family and \rdfl\ operate server-free
via the ring topology.

\begin{table}[!htp]
\centering\footnotesize
\caption{Per-round communication cost. $p_{e}=|\boldsymbol{\theta}_{i}|$ (extractor parameters);
$p=p_{e}+|\boldsymbol{\eta}_{i}|$ (full model).
$\star$: routed through a central server bottleneck.
$K_g=\lceil N/2\rceil$ for \fibflpp; setting $K_g=1$ recovers \fibflp\ cost.}
\label{tab:comm}
\begin{tabular}{lll}
\toprule
Method & Cost / round & Notes \\
\midrule
\fedavg  & $2Np$     & $N$ uploads + $N$ downloads of full model via central server$^{\star}$ \\
\fedrep  & $2Np_{e}$ & Extractor only via server$^{\star}$; head remains local \\
\rdfl    & $2Np$     & Full model (head included) via ring; 1 pass \\
\fibfl   & $2Np_{e}$ & Extractor only via ring; 1 gossip pass; head permanently local \\
\fibflp  & $2Np_{e}$ & As \fibfl\ plus 1 scalar accuracy value per peer; 1 pass \\
\fibflpp & $2KNp_{e}$ & Extractor via ring; $K_g$ gossip passes; server-free \\
\bottomrule
\end{tabular}
\end{table}

\subsection{Privacy Analysis}

\begin{proposition}[Privacy Guarantees of the FibFL Family]
\label{prop:privacy}
Under Definitions~\ref{def:A}--\ref{def:C}, every member of the FibFL family (\fibfl, \fibflp, \fibflpp) satisfies
all three constraints.
Specifically: (i)~head privacy holds because $\boldsymbol{\eta}_{i}$ is updated locally
via Phases~1 and~2 of Algorithm~\ref{alg:fibfl} and never appears in any gossip message;
(ii)~server-free operation holds because each client transmits only to its two ring
neighbours; (iii)~ring topology is enforced by the fixed neighbourhood structure
$\{c_{i-1},c_{i+1}\}$.
\end{proposition}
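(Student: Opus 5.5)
The proof is a direct inspection of Algorithms~\ref{alg:fibfl}--\ref{alg:fibflpp} against Definitions~\ref{def:A}--\ref{def:C}, one constraint at a time. For each algorithm I would list every message that leaves a client. That means every quantity read at position $p$ that was produced at a different position. I would then check each message's content and recipient against the three definitions.

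\emph{Head privacy (Def.~\ref{def:A}).} In Algorithms~\ref{alg:fibfl} and~\ref{alg:fibflp} the only cross-client reads are the snapshots $\boldsymbol{\theta}_{L},\boldsymbol{\theta}_{R'}$ of $\mathrm{params}(\phi_{\cdot})$, plus, for \fibflp, the scalar accuracies $a_{L},a_{R'}$. The head $h_i$ (parameters $\boldsymbol{\eta}_i$) is written only in Phase~1, using local data and the local persistent optimiser $\mathcal{O}^{h}_{i}$. Phase~2 freezes it, and the blending step touches only $\phi_p$. So $\boldsymbol{\eta}_i$ never appears in a message. The scalar $a_j$ is a statistic of the model, not head parameters, so Def.~\ref{def:A} as worded is unaffected; I would remark that it is the only extra information released.

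\emph{Ring topology (Def.~\ref{def:C}).} Every read at position $p$ has index $(p\pm1)\bmod N$, or $\sigma^{*}[(p\pm1)\bmod N]$ in \fibflpp. In the latter case the ring is the one induced by the bijection $\sigma^{*}$, fixed before training. For \fibflpp\ I would argue by induction over the $K_g$ passes: each pass reads only the current $\boldsymbol{\theta}$ of the two $\sigma^{*}$-neighbours. The multi-hop influence of Proposition~\ref{prop:coverage} is therefore relayed through intermediate clients rather than carried over new links.

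\emph{Server-free operation (Def.~\ref{def:B}).} Client $p$ does combine two parameter vectors. I would read Def.~\ref{def:B} through its second sentence: what matters is whether any entity acts as a hub receiving from more than its own ring neighbours, or aggregates on behalf of others. Each client receives exactly two messages per pass and uses them only to update its own $\phi_p$. No node's in-degree exceeds $2$, and no coordinator exists.

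The main obstacle is the \fedavg\ warmup of \fibflpp\ (rounds $r\le W$). There the full model $\boldsymbol{\omega}_i$, head included, is centrally averaged. This literally violates Def.~\ref{def:A} (which quantifies over all $r\ge1$) and Def.~\ref{def:B} in those rounds. I would first try to treat the warmup as a shared-initialisation phase that precedes the federated protocol proper, and restate the claim for \fibflpp\ as holding for all rounds $r>W$. This matches the text's ``after round $W$, heads become permanently private''. If that reading is not accepted, the proposition holds verbatim only for \fibfl, \fibflp, and \fibflpp\ with $W=0$, and I would state that restriction explicitly.
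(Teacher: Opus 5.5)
Your route is the same as the paper's: a direct inspection of the algorithms against Definitions~\ref{def:A}--\ref{def:C}. Your inspection is more careful, and the caveats you raise are genuine defects in the paper's own proof, not excess caution.

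The paper's proof says that in Algorithms~\ref{alg:fibfl} and~\ref{alg:fibflpp} ``the only inter-client communication is $\boldsymbol{\theta}_{i}\to c_{i\pm1}$''. This overlooks the warmup branch of Algorithm~\ref{alg:fibflpp}. There, $\boldsymbol{\omega}_{i}\leftarrow\mathrm{FedAvg}(\{\boldsymbol{\omega}_{i}\},\{n_{i}/n\})$ centrally averages the full model, heads included, for all $r\le W=\lfloor R/6\rfloor$ ($W=5$ in the $R=30$ experiments). So for \fibflpp\ the proposition as stated is false, and your restriction to rounds $r>W$ (or to $W=0$) is necessary. The warmup also breaks Definition~\ref{def:C} (a star, not a ring), not only A and B.

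Likewise, the paper's sentence ``no entity aggregates parameters from more than one client'' is literally contradicted by the blend $\alpha\boldsymbol{\theta}_{L}+\beta\boldsymbol{\theta}_{R}$. Reading Definition~\ref{def:B} through its hub clause (or its parenthetical ``server or coordinator''), as you do, is what makes (ii) true. The paper also skips two other points you handle: Definition~\ref{def:C} is written for the identity ring, so your $\sigma^{*}$-relabelling is needed, and \fibflp\ and \fibflpp\ additionally release the scalars $a_{j}$.

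One omission remains in your treatment of \fibflpp. The Fix~A pre-training step computes $\sigma^{*}$ by 2-opt over all $N$ class-proportion vectors $q_{i}$. So some party must hold every $q_{i}$, and $\sigma^{*}$ must then be made known to the clients. That is communication outside the ring links, and Definition~\ref{def:C} is not quantified over rounds.

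Your fallback ``holds verbatim for \fibflpp\ with $W=0$'' is therefore not right either. You need one of two things:
\begin{itemize}
\item classify the ring construction as pre-protocol setup, as you propose for the warmup, and say so explicitly; or
\item show it can be done over ring links, e.g.\ by relaying the $q_{j}$ around the identity ring and letting every client run the same deterministic 2-opt. Even then the setup ring and the gossip ring differ in general, so Definition~\ref{def:C} holds only phase by phase.
\end{itemize}
As with the $a_{j}$, you should also record that this step discloses every client's label histogram, though not its head.
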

\begin{proof}
Examining Algorithms~\ref{alg:fibfl} and~\ref{alg:fibflpp}, the only inter-client
communication is $\boldsymbol{\theta}_{i}\to c_{i\pm1}$ (extractor transmission).
The head parameters $\boldsymbol{\eta}_{i}$ appear only in local Phase-1 and Phase-2
updates; they are never serialised or broadcast.
No entity aggregates parameters from more than one client, satisfying Definition~\ref{def:B}.
Communication is restricted to the ring adjacency set by construction.
\end{proof}

Proposition~\ref{prop:privacy} establishes information-theoretic head privacy under
honest-but-curious peers.
The protocol is further compatible with local differential privacy~(DP): clients may add
calibrated Gaussian noise to $\boldsymbol{\theta}_{i}$ before
transmission without
protecting $\boldsymbol{\eta}_{i}$, since the head is never communicated.
Formal DP composition with the accuracy gate is left to future work.



\section{Experiments and Evaluation}
\label{sec:experiments}


\subsection{Experimental Setup}
\label{sec:setup}

We evaluate three proposed algorithms --- \fibfl, \fibflp, and
\fibflpp\ --- against three baselines: \fedavg~\cite{mcmahan2017},
\fedrep~\cite{collins2021exploiting}, and \rdfl, across
168 experimental conditions (4 datasets $\times$ 7
heterogeneity scenarios $\times$ 6 methods).

\noindent\textbf{Datasets.}
Table~\ref{tab:datasets} summarises the four benchmark datasets,
covering a range of visual complexity, data abundance, and
federation sizes.

\begin{table}[!htp]\centering\footnotesize
\caption{Dataset summary.}
\label{tab:datasets}
\begin{tabular}{lcccc}
\toprule
Dataset       & Samples  & Classes & Image type & Task / Notes \\
\midrule
CIFAR-10      & 50{,}000 & 10 & RGB        & Highest visual complexity \\
Fashion-MNIST & 60{,}000 & 10 & Greyscale  & Apparel; high inter-class similarity \\
MNIST-60k     & 60{,}000 & 10 & Greyscale  & Digit images; data-abundant regime \\
MNIST-1797    &  1{,}437 & 10 & Greyscale  & Digit images; small-dataset stress test \\
\bottomrule
\end{tabular}
\end{table}

\noindent\textbf{Heterogeneity Scenarios.}
Seven partitioning regimes are applied per dataset:
\textbf{IID} (equal random split);
\textbf{Dirichlet} Dir($\alpha$) with $\alpha\in\{0.8,0.5,0.1\}$
(mild to strong random skew)~\cite{hsu2019}; and
\textbf{label-skew} with $K\in\{1,2,3\}$ primary classes per client
(structured complementary skew).
All six methods use the identical data split in every experiment.

\noindent\textbf{Methods.} 
\noindent\textbf{\fedavg}~\cite{mcmahan2017}: Standard size-weighted
global averaging via a central server. Optimal for homogeneous data;
primary accuracy baseline.
\noindent\textbf{\fedrep}~\cite{collins2021exploiting}: Globally
shared extractor with locally trained classification heads.
Enables personalisation but requires sufficient per-client data.
\noindent\textbf{\rdfl}: Adapted from the ring-topology decentralised
framework of Wang et al.~\cite{wang2021rdfl}, which originally
employed generative and discriminative models; here we adopt its
ring communication structure for classification. Each client
aggregates its local model with equal weights from its left and
right ring neighbours at every round without a central server,
serving as a controlled baseline that isolates the contribution of
ring topology from the optimised ordering and Fibonacci-weighted
aggregation of the \fibfl\ family.
\noindent\textbf{\fibfl} (proposed): Server-free ring aggregation
with Fibonacci-weighted client participation and aggregation weights,
and permanently private classification heads.
\noindent\textbf{\fibflp} (proposed): Extends \fibfl\ with
accuracy-gated adaptive selection that activates client subsets each
round based on local data statistics.
\noindent\textbf{\fibflpp} (proposed): The full system. Augments
\fibflp\ with 2-opt ring ordering that minimises total pairwise
statistical dissimilarity between adjacent clients, computed once
from class-histogram summaries before training begins.

\noindent\textbf{Evaluation Metrics.}
Four metrics are reported: mean top-1 accuracy
$\mu{=}\frac{1}{N}\sum_i\mathrm{acc}_i$; Gini coefficient
$G\in[0,1]$ (lower = fairer); rounds to $50\%$ accuracy
$R_{50\%}$ (lower = faster warm-start); and plateau standard
deviation Plat.~$\sigma$ (lower = more stable).

\noindent\textbf{Classification Model.}
All six methods share an identical two-component MLP architecture
to ensure that performance differences across methods reflect the
aggregation strategy rather than architectural advantages.
A flat MLP is chosen deliberately over a CNN for three reasons:
(i)~it is architecture-agnostic and introduces no convolutional
inductive bias that could interact differently with each method's
aggregation mechanism;
(ii)~LayerNorm replaces BatchNorm to avoid the batch-size dependency
that causes instability when clients hold very few samples per class
under strong non-IID partitioning (Dir~$\alpha{=}0.1$,
label-skew $K{=}1$); and
(iii)~flat MLPs on MNIST and CIFAR-10 are standard in federated
learning benchmarks~\cite{mcmahan2017,collins2021exploiting},
ensuring comparability with prior work.
Extending FIRMA to CNN and transformer backbones is left as future
work.

The \textbf{feature extractor} is a three-layer MLP with hidden
dimension 256 and embedding dimension $e{=}128$:
\begin{equation*}
\mathrm{FC}(d{\to}256) \xrightarrow{\mathrm{LN+ReLU}}
\mathrm{FC}(256{\to}256) \xrightarrow{\mathrm{LN+ReLU}}
\mathrm{FC}(256{\to}e) \xrightarrow{\mathrm{ReLU}},
\end{equation*}
where $d$ is the flattened input dimension and LN denotes
LayerNorm~\cite{ba2016layer}.
The \textbf{classification head} is a single linear layer
$\mathrm{FC}(e{\to}C)$, where $C{=}10$ is the number of classes.
Input features are flattened pixel values normalised to $[0,1]$.

The head is permanently private in all \fibfl\ variants and in
\fedrep: it never leaves the local device and is never aggregated.
In \fedavg, the full model (extractor and head) is aggregated
via a central server using size-weighted averaging.
In \rdfl, the full model is shared via ring gossip with uniform
neighbour weights.
The \fibfl\ family and \fedrep\ are therefore the only protocols
in this study that provide head privacy; among these, only the
\fibfl\ family combines head privacy with a server-free ring
topology, occupying the unique quadrant identified in
Section~\ref{sec:intro}.

\noindent\textbf{Parameter Setting.}
Table~\ref{tab:hyperparams} lists all hyperparameters, organised
into three groups.
\fedavg\ hyperparameters are fixed to McMahan et al.~\cite{mcmahan2017}
(SGD, $\eta{=}0.01$, momentum $0.9$, $E{=}5$ local epochs).
\fedrep\ follows Collins et al.~\cite{collins2021exploiting} as
closely as possible: $E_h{=}E_e{=}2$ is the closest practical
equivalent to the $\tau{=}1$ local step of the original paper given
our larger batch size and two-phase structure.
\rdfl\ carries over the ring aggregation hyperparameters of Wang
et al.~\cite{wang2021rdfl} ($E{=}5$, $\eta{=}0.01$,
$\gamma{=}0.5$), adapted for classification.
The \fibfl\ family is configurable via \texttt{FIBFL\_CFG}; all
other settings are shared across methods.

\begin{table}[t]
\centering\small
\caption{Hyperparameters used in all experiments.
$^{\dagger}$$E_h{=}E_e{=}2$ is the closest practical equivalent
to $\tau{=}1$ in Collins et al.\ given our batch size and
two-phase structure.}
\label{tab:hyperparams}
\setlength{\tabcolsep}{6pt}
\renewcommand{\arraystretch}{1.25}
\begin{tabular}{p{4.4cm}p{4.0cm}p{5.5cm}}
\toprule
\textbf{Hyperparameter} & \textbf{Value} & \textbf{Applies to / Source} \\
\midrule
\multicolumn{3}{l}{\textit{\textbf{Baselines — fixed, not tunable}}} \\
\midrule
$E$ (local epochs)                                  & 5              & \fedavg, \rdfl \\
$E_{h}$ / $E_{e}$ (epochs)$^{\dagger}$             & 2 / 2          & \fedrep \\
Optimiser                                           & SGD, mom.\ 0.9 & \fedavg, \rdfl, \fedrep \\
Learning rate $\eta$                                & 0.01           & \fedavg, \rdfl, \fedrep \\
$\gamma$ (blend weight)                             & 0.5            & \rdfl \\
\midrule
\multicolumn{3}{l}{\textit{\textbf{FibFL family — configurable via \texttt{FIBFL\_CFG}}}} \\
\midrule
$E_{h}$ (head epochs)                               & 1              & \fibfl\ family \\
$E_{e}$ (extractor epochs)                          & 20             & \fibfl\ family \\
Optimiser                                           & Adam (persistent) & \fibfl\ family \\
Learning rate $\eta$                                & 0.01           & \fibfl\ family \\
$\gamma$ (self-retention)                           & 0.5            & \fibfl, \fibflp \\
$\gamma_{\mathrm{start}}$ / $\gamma_{\mathrm{end}}$ & 0.4 / 0.05     & \fibflpp \\
$\tau$ (accuracy gate)                              & 0.35           & \fibflp, \fibflpp \\
\midrule
\multicolumn{3}{l}{\textit{\textbf{Shared experimental settings}}} \\
\midrule
Rounds $R$                                          & 30             & All \\
Batch size                                          & 64             & All \\
$N$ (large datasets)                                & 10             & CIFAR-10, MNIST-60k, Fashion-MNIST \\
$N$ (MNIST-1797)                                    & 5              & Small-federation experiments \\
$K_g = \lceil N/2 \rceil$                           & 5 / 3          & \fibflpp\ ($N{=}10$ / $N{=}5$) \\
Embedding $e$                                       & 128            & All \\
\bottomrule
\end{tabular}
\end{table}


\subsection{Accuracy and Fairness Results}
\label{sec:accuracy}

\textbf{1) CIFAR-10.} CIFAR-10 is the most challenging dataset, with the widest performance spread. Table~\ref{tab:cifar10_acc} reveals two structurally distinct regimes.
Under IID and all three Dirichlet scenarios, \fedavg\ leads throughout (0.5585
down to 0.4495), \fedrep\ places second, and \fibflpp\ third; \fibfl\ and \fibflp\
rank last in every such column, trailing \fedavg\ by 13--19\,pp.
Under label-skew the ranking reverses: \fedrep\ leads at all three $K$ values
(0.7256, 0.7172, 0.6466), while \fedavg\ drops to 5th at $K{=}1$ and $K{=}2$.
\rdfl\ edges \fibflpp\ at $K{=}1$ (0.6838 vs.\ 0.6825), while \fibflpp\ surpasses
\rdfl\ at $K{=}2$ (0.6849 vs.\ 0.6681) and $K{=}3$ (0.6104 vs.\ 0.5685).
\fibfl\ and \fibflp\ trail \fibflpp\ in every scenario without exception.

On fairness, Table~\ref{tab:cifar10_gini} reveals a clear fairness pattern.
Under IID, \fedrep\ leads (0.0112) with \fibflpp\ second (0.0153); under all three
Dirichlet scenarios, \fedavg\ achieves the lowest Gini, with \fibflpp\ consistently
second among almost all methods.
All non-\fedavg\ methods degrade severely at Dir~$\alpha{=}0.1$, exceeding 0.46,
with \fibflp\ the worst (0.5027) despite a competitive mean accuracy.
Under label-skew, \fedrep\ leads at $K{=}1$ (0.0731) and \fedavg\ at $K{=}2$ and
$K{=}3$ (0.0710, 0.0796); \fibflpp\ places second at $K{=}1$ (0.0785) and remains
the fairest proposed method throughout.
\fibfl\ and \fibflp\ are the least fair proposed methods in every scenario.

\begin{table}[!htp]\centering
\caption{Top-1 accuracy — CIFAR-10 ($N=10$, $R=30$).}
\label{tab:cifar10_acc}
\begin{tabular}{lccccccc}
\toprule
Method & IID & Dir 0.8 & Dir 0.5 & Dir 0.1 & LS K=1 & LS K=2 & LS K=3 \\
\midrule
FedAvg  &\best{0.5585}&\best{0.5316}&\best{0.5144}&\best{0.4495}&0.5291&0.5508&0.5727\\
FedRep  &\underline{0.5404}&\underline{0.4221}&\underline{0.3921}&\underline{0.2657}&\best{0.7256}&\best{0.7172}&\best{0.6466}\\
RDFL    &0.4228&0.3055&0.3191&0.2412&\underline{0.6838}&0.6681&0.5685\\
\rowcolor{propbg}FibFL   &0.3794&0.2642&0.2880&0.2241&0.6572&0.6377&0.5300\\
\rowcolor{propbg}FibFL+  &0.3705&0.2591&0.2789&0.2286&0.6554&0.6310&0.5209\\
\rowcolor{propbg}FibFL++ &{0.5058}&{0.4031}&{0.3895}&{0.2593}&{0.6825}&\underline{0.6849}&\underline{0.6104}\\
\bottomrule
\end{tabular}
\end{table}

\begin{table}[!htp]\centering
\caption{Gini coefficient — CIFAR-10.}
\label{tab:cifar10_gini}
\begin{tabular}{lccccccc}
\toprule
Method & IID & Dir 0.8 & Dir 0.5 & Dir 0.1 & LS K=1 & LS K=2 & LS K=3 \\
\midrule
FedAvg  &{0.0186}&\best{0.0245}&\best{0.0617}&\best{0.1252}&0.1068&\best{0.0710}&\best{0.0796}\\
FedRep  &\best{0.0112}&0.0919&0.2107&0.4815&\best{0.0731}&\underline{0.1015}&\underline{0.0947}\\
RDFL    &0.0266&0.1302&0.2243&\underline{0.4691}&0.0788&0.1214&0.0947\\
\rowcolor{propbg}FibFL   &0.0193&0.1395&0.2679&0.4901&0.0901&0.1448&0.1155\\
\rowcolor{propbg}FibFL+  &0.0256&0.1586&0.2367&0.5027&0.0918&0.1463&0.0993\\
\rowcolor{propbg}FibFL++ &\underline{0.0153}&\underline{0.0684}&\underline{0.2009}&0.4951&\underline{0.0785}&0.1168&0.1064\\
\bottomrule
\end{tabular}
\end{table}

\noindent\textbf{2) Fashion-MNIST.} Fashion-MNIST is a mid-complexity benchmark. Table~\ref{tab:fashion_acc} shows two distinct regimes.
Under IID, \fedavg\ leads (0.8938), \fedrep\ places second (0.8822), and \fibflpp\
third (0.8788).
Under all three Dirichlet scenarios, \fedavg\ leads; \fibflpp\ places second
throughout (0.8051, 0.7888, 0.4331), while \fedrep\ collapses to last at
Dir~$\alpha{=}0.1$ (0.3715).
Under label-skew, \fedrep\ leads all three $K$ values; \rdfl\ places second at
$K{=}1$ (0.9347) and \fibflpp\ second at $K{=}2$ and $K{=}3$ (0.9190, 0.9120).
\fibfl\ and \fibflp\ rank in the bottom half in every scenario.
On fairness, Table~\ref{tab:fashion_gini} shows two clear regimes.
Under IID and Dirichlet, \fedrep\ leads fairness at IID (0.0043) and \fedavg\
leads in all three Dirichlet scenarios; \fibflpp\ places second under Dirichlet
throughout (0.0898, 0.0761, 0.3084), while \fedrep\ collapses to last at
Dir~$\alpha{=}0.1$ (0.4954).
Under label-skew, \fedrep\ leads at $K{=}1$ and $K{=}2$, \fedavg\ at $K{=}3$;
\fibflpp\ ranks third in all three columns and is the fairest proposed method
throughout.
\fibfl\ and \fibflp\ are the least fair proposed methods in every scenario.\\

\begin{table}[!htp]\centering
\caption{Top-1 accuracy — Fashion-MNIST ($N=10$, $R=30$).}
\label{tab:fashion_acc}
\begin{tabular}{lccccccc}
\toprule
Method & IID & Dir 0.8 & Dir 0.5 & Dir 0.1 & LS K=1 & LS K=2 & LS K=3 \\
\midrule
FedAvg  &\best{0.8938}&\best{0.8817}&\best{0.8842}&\best{0.8247}&0.8734&0.8872&0.8973\\
FedRep  &\underline{0.8822}&0.8048&0.7702&0.3715&\best{0.9414}&\best{0.9264}&\best{0.9172}\\
RDFL    &0.8511&0.7499&0.6983&0.3958&\underline{0.9347}&0.9123&0.9010\\
\rowcolor{propbg}FibFL   &0.8047&0.7283&0.6966&0.3988&0.9237&0.8920&0.8758\\
\rowcolor{propbg}FibFL+  &0.8134&0.7334&0.6955&0.4122&0.9186&0.8917&0.8800\\
\rowcolor{propbg}FibFL++ &{0.8788}&\underline{0.8051}&\underline{0.7888}&\underline{0.4331}&{0.9262}&\underline{0.9190}&\underline{0.9120}\\
\bottomrule
\end{tabular}
\end{table}

\begin{table}[!htp]\centering
\caption{Gini coefficient — Fashion-MNIST.}
\label{tab:fashion_gini}
\begin{tabular}{lccccccc}
\toprule
Method & IID & Dir 0.8 & Dir 0.5 & Dir 0.1 & LS K=1 & LS K=2 & LS K=3 \\
\midrule
FedAvg  &\underline{0.0050}&\best{0.0290}&\best{0.0294}&\best{0.0445}&0.0426&\underline{0.0259}&\best{0.0143}\\
FedRep  &\best{0.0043}&0.0975&0.0925&0.4954&\best{0.0208}&\best{0.0238}&\underline{0.0200}\\
RDFL    &0.0075&0.1113&0.1311&0.4045&\underline{0.0237}&0.0295&0.0251\\
\rowcolor{propbg}FibFL   &0.0158&0.1146&0.1068&0.3966&0.0299&0.0346&0.0325\\
\rowcolor{propbg}FibFL+  &0.0073&0.1061&0.1263&0.3928&0.0326&0.0365&0.0313\\
\rowcolor{propbg}FibFL++ &0.0085&\underline{0.0898}&\underline{0.0761}&\underline{0.3084}&0.0304&0.0268&0.0226\\
\bottomrule
\end{tabular}
\end{table}

\noindent \textbf{3) MNIST-60K.} 
Tables~\ref{tab:mnist60k_acc}--\ref{tab:mnist60k_gini} show that data abundance
narrows performance gaps considerably relative to other datasets.
Under IID, \fedavg\ leads accuracy (0.9850) with \fibflpp\ second (0.9803) and
\fedrep\ third (0.9802), the tightest three-way IID cluster in the study.
Under Dirichlet, \fedavg\ leads accuracy throughout; \fibflpp\ places second at
Dir~$\alpha{=}0.8$ and $0.1$ (0.9425, 0.6510), while \fedrep\ places second at
Dir~$\alpha{=}0.5$ (0.9427), edging \fibflpp\ (0.9424) by just 0.0003.
Unlike all other datasets, \fedrep\ does not collapse at Dir~$\alpha{=}0.1$
(0.6248), confirming that data abundance sustains personalised head performance
under severe class imbalance.
Under label-skew, \fedrep\ leads at $K{=}1$ and $K{=}2$ (0.9839, 0.9818) with
\fibflpp\ second (0.9828, 0.9797); \fedavg\ reclaims first at $K{=}3$ (0.9803)
with \fibflpp\ again second (0.9736).
On fairness, \fedavg\ achieves the lowest Gini in all seven scenarios; \fibflpp\
places second at Dir~$\alpha{=}0.8$, $0.1$, $K{=}2$, and $K{=}3$ (0.0276,
0.2267, 0.0084, 0.0068), while \fedrep\ places second at IID, Dir~$\alpha{=}0.5$,
and $K{=}1$ (0.0030, 0.0122, 0.0046).
\fibfl\ and \fibflp\ rank in the bottom half across all columns.\\

\begin{table}[!htp]\centering
\caption{Top-1 accuracy — MNIST-60k ($N=10$, $R=30$).}
\label{tab:mnist60k_acc}
\begin{tabular}{lccccccc}
\toprule
Method & IID & Dir 0.8 & Dir 0.5 & Dir 0.1 & LS K=1 & LS K=2 & LS K=3 \\
\midrule
FedAvg  &\best{0.9850}&\best{0.9852}&\best{0.9815}&\best{0.9689}&0.9797&0.9765&\best{0.9803}\\
FedRep  &0.9802&0.9389&\underline{0.9427}&0.6248&\best{0.9839}&\best{0.9818}&0.9544\\
RDFL    &0.9580&0.8772&0.8506&0.5766&0.9754&0.9688&0.9406\\
\rowcolor{propbg}FibFL   &0.9456&0.8761&0.8544&0.5919&0.9725&0.9658&0.9368\\
\rowcolor{propbg}FibFL+  &0.9516&0.8819&0.8612&0.5986&0.9734&0.9663&0.9368\\
\rowcolor{propbg}FibFL++ &\underline{0.9803}&\underline{0.9425}&\underline{0.9424}&\underline{0.6510}&\underline{0.9828}&\underline{0.9797}&\underline{0.9736}\\
\bottomrule
\end{tabular}
\end{table}

\begin{table}[!htp]\centering
\caption{Gini coefficient — MNIST-60k.}
\label{tab:mnist60k_gini}
\begin{tabular}{lccccccc}
\toprule
Method & IID & Dir 0.8 & Dir 0.5 & Dir 0.1 & LS K=1 & LS K=2 & LS K=3 \\
\midrule
FedAvg  &\best{0.0029}&\best{0.0025}&\best{0.0020}&\best{0.0086}&\best{0.0041}&\best{0.0060}&\best{0.0041}\\
FedRep  &\underline{0.0030}&0.0303&\underline{0.0122}&0.2398&\underline{0.0046}&0.0087&0.0286\\
RDFL    &0.0039&0.0581&0.0290&0.2687&0.0055&0.0117&0.0285\\
\rowcolor{propbg}FibFL   &0.0067&0.0515&0.0316&0.2429&0.0056&0.0122&0.0287\\
\rowcolor{propbg}FibFL+  &0.0044&0.0503&0.0237&0.2401&0.0058&0.0134&0.0296\\
\rowcolor{propbg}FibFL++ &0.0038&\underline{0.0276}&0.0131&\underline{0.2267}&0.0054&\underline{0.0084}&\underline{0.0068}\\
\bottomrule
\end{tabular}
\end{table}

\noindent\textbf{4) MNIST-1797.} Tables~\ref{tab:mnist1797_acc}--\ref{tab:mnist1797_gini} reveal the most extreme
method differences in the study, amplified by the small federation ($N{=}5$) and
short round budget ($R{=}10$).
Under IID, \fedavg\ leads accuracy (0.9694) with \fibflpp\ second (0.9556) and
\fedrep\ third (0.9528); \rdfl\ ranks last (0.9417).
Under all three Dirichlet scenarios, \fedavg\ leads by a large margin; \fibflpp\
places second throughout (0.9209, 0.8663, 0.3323), while \fedrep\ collapses to
last at Dir~$\alpha{=}0.8$ and $0.1$ (0.7360, 0.2278) and \rdfl\ places third
in both.
Under label-skew, \fibflpp\ achieves the study's most significant result at
$K{=}1$: it leads all methods including \fedavg\ (0.9695 vs.\ 0.9691$^\dagger$),
the only instance across all 168 conditions where a proposed method outperforms
\fedavg; \fibflpp\ also places second at $K{=}2$ (0.9374) and $K{=}3$ (0.9309).
On fairness, \fedavg\ leads Gini under IID and all Dirichlet scenarios; \fibflpp\
places second at Dir~$\alpha{=}0.8$ and $0.5$ (0.0179, 0.0647), while \fibflp\
edges \fibflpp\ at Dir~$\alpha{=}0.1$ (0.4659 vs.\ 0.4664).
Under label-skew, \fedrep\ leads fairness at $K{=}1$ (0.0107) and $K{=}2$
(0.0211), \fedavg\ at $K{=}3$ (0.0093); \fibflpp\ places second at $K{=}1$
(0.0132) and $K{=}3$ (0.0320), and third at $K{=}2$ (0.0277).
\fibfl\ ranks last in accuracy at $K{=}3$ (0.8033) due to its mid-plateau
collapse, and last in fairness at IID (0.0212) and $K{=}3$ (0.0789).

\begin{table}[!htp]\centering
\caption{Top-1 accuracy — MNIST-1797 ($N=5$, $R=10$).
$\dagger$: only instance across all 168 conditions where a proposed method beats \fedavg.}
\label{tab:mnist1797_acc}
\begin{tabular}{lccccccc}
\toprule
Method & IID & Dir 0.8 & Dir 0.5 & Dir 0.1 & LS K=1 & LS K=2 & LS K=3 \\
\midrule
FedAvg  &\best{0.9694}&\best{0.9708}&\best{0.9712}&\best{0.9398}&\underline{0.9691}&\best{0.9698}&\best{0.9655}\\
FedRep  &0.9528&0.7360&0.7765&0.2278&0.8816&0.9165&0.8972\\
RDFL    &0.9417&0.8237&0.7895&0.2745&0.9492&0.9284&0.9193\\
\rowcolor{propbg}FibFL   &0.9444&0.8170&0.8067&0.2866&0.9489&0.9285&0.8033\\
\rowcolor{propbg}FibFL+  &0.9472&0.7889&0.7397&0.3248&0.9302&0.8663&0.8710\\
\rowcolor{propbg}FibFL++ &\underline{0.9556}&\underline{0.9209}&\underline{0.8663}&\underline{0.3323}&\best{0.9695}$^\dagger$&\underline{0.9374}&\underline{0.9309}\\
\bottomrule
\end{tabular}
\end{table}

\begin{table}[!htp]\centering
\caption{Gini coefficient — MNIST-1797.}
\label{tab:mnist1797_gini}
\begin{tabular}{lccccccc}
\toprule
Method & IID & Dir 0.8 & Dir 0.5 & Dir 0.1 & LS K=1 & LS K=2 & LS K=3 \\
\midrule
FedAvg  &\best{0.0115}&\best{0.0122}&\best{0.0152}&\best{0.0268}&0.0172&\best{0.0066}&\best{0.0093}\\
FedRep  &\underline{0.0152}&0.0963&0.0832&0.5963&\best{0.0107}&\underline{0.0211}&0.0563\\
RDFL    &0.0189&0.0345&0.0842&0.5280&0.0143&0.0310&0.0328\\
\rowcolor{propbg}FibFL   &0.0212&0.0403&0.0787&0.5149&0.0186&0.0316&0.0789\\
\rowcolor{propbg}FibFL+  &0.0176&0.0384&0.0791&\underline{0.4659}&0.0227&0.0730&0.0564\\
\rowcolor{propbg}FibFL++ &0.0198&\underline{0.0179}&\underline{0.0647}&0.4664&\underline{0.0132}&0.0277&\underline{0.0320}\\
\bottomrule
\end{tabular}
\end{table}
\subsection{Per-Round Convergence Analysis}
\label{sec:convergence}

\noindent\textbf{1) CIFAR-10.}
Under IID, \fedavg\ and \fedrep\ cold-start near chance at $R$=1 before ramping
steeply; \rdfl, \fibfl, and \fibflp\ warm-start immediately ($\geq$0.38) through
Fibonacci scheduling.
\fibflpp\ exhibits its characteristic cold-start (near-zero at $R$=1--2 during ring
computation) then ramps sharply to 0.50 by $R$=5, maintaining a stable plateau at
$0.505\pm0.003$.
The cold-start is bounded to 2--3 rounds and fully amortised by $R$=5 in this
30-round experiment.
Under Dirichlet, the post-peak decay of \fibflpp\ is visible: it peaks near 0.49
at $R$=5 under Dir~0.8 then stabilises around 0.40 for $R$=6--30, a $\sim$9\,pp
decay as the ring permutation computed from initial histograms becomes misaligned
with evolving client representations.
Under Dir~$\alpha$=0.1, \fibflpp\ uniquely starts \emph{above} \fedavg\ at $R$=1
(0.30 vs.\ 0.29) as the ring provides immediate gradient alignment, but subsequently
declines as the constrained topology cannot adapt to uneven gradient magnitudes.
\fedavg\ is the only method to maintain a clear upward trend throughout all 30
rounds.
Under label-skew, \fibflpp\ demonstrates its most distinctive behaviour.
At $K$=1, all methods except \fedavg\ warm-start above 0.60; \fibflpp\ cold-starts
at 0.086 then jumps from 0.46 to 0.67 at $R$=6 as the ring fully engages (77.2\%
ring saving), settling at $0.685\pm0.003$.
At $K$=2, \fibflpp\ again shows the jump at $R$=6 and reaches its strongest
CIFAR-10 result (0.6849), surpassing all other methods.
At $K$=3, \fibfl\ and \fibflp\ show reduced oscillation as broader class coverage
provides more stable gradients.\\

\begin{figure}[!htp]
\centering
\includegraphics[width=0.95\linewidth]{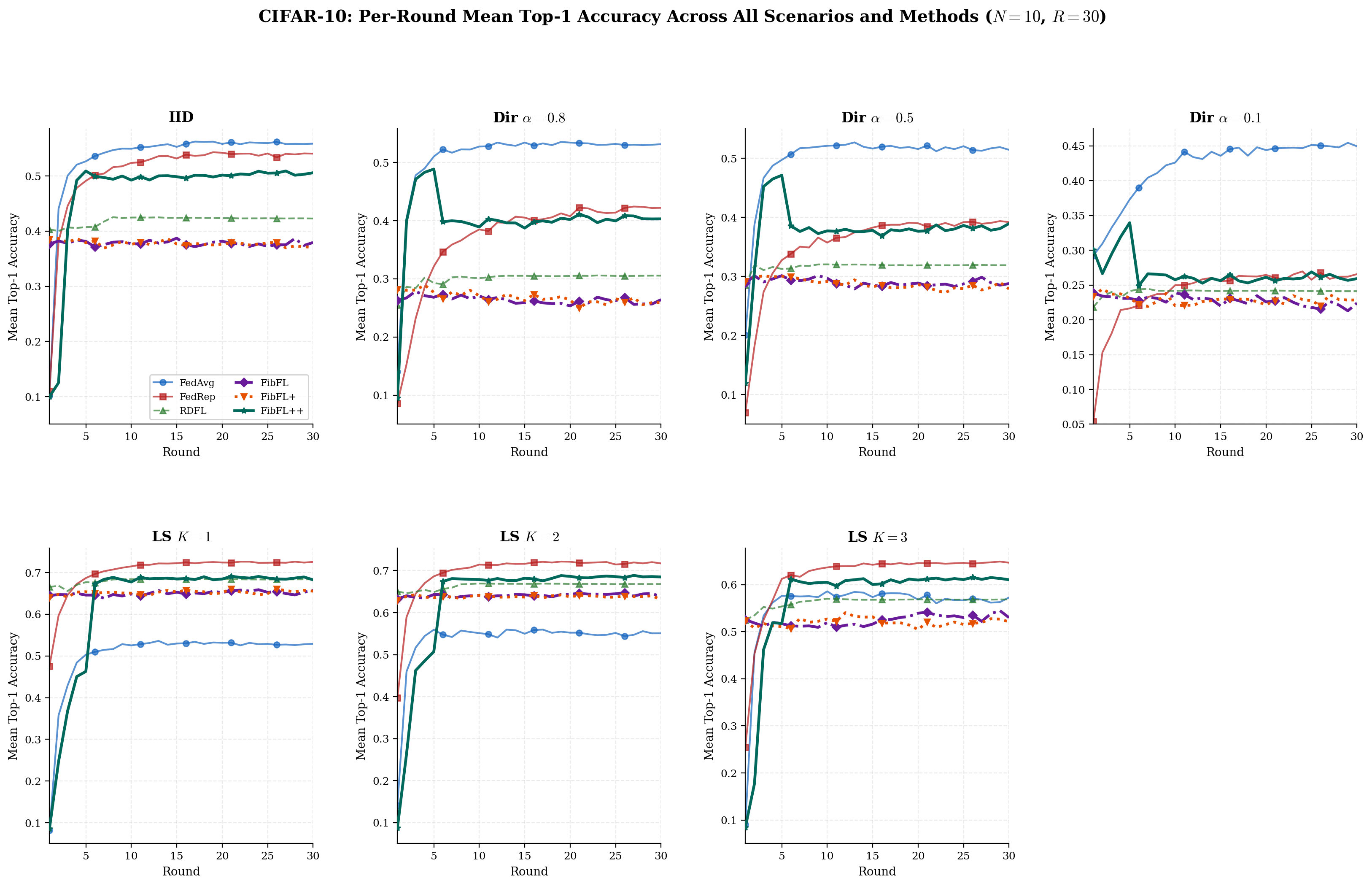}
\caption{Per-round mean top-1 accuracy on CIFAR-10 ($N=10$, $R=30$).
Proposed methods plotted with thicker lines; \fibflpp\ uses star markers.(color online)}
\label{fig:conv_cifar}
\end{figure}

\noindent\textbf{2) Fashion-MNIST.}
The Fashion-MNIST convergence panels are more compressed than CIFAR-10, reflecting
the simpler feature structure.
Under IID, \fedavg\ and \fibflpp\ are nearly indistinguishable after $R$=5
(gap of only 1.5\,pp at $R$=30), the tightest IID convergence race after MNIST-60k.
Under Dir~$\alpha$=0.1, the most dramatic feature is \fedrep's collapse to 0.372
--- visible as a plateau far below all other methods from $R$=3 onwards.
Client C3, holding 4,422 samples of a single apparel class, achieves 0.000
accuracy on all other classes: its personalised head receives no gradient signal for
nine of the ten categories.
\fibflpp's curve shows high volatility (Plat.~$\sigma$=0.0170, highest in
Fashion-MNIST) as the ring permutation computed at $R$=0 becomes misaligned with
the extreme single-class distributions.
Under label-skew $K$=1, \fibflpp's most dramatic Fashion-MNIST event occurs: a
jump of $+$25.3\,pp at $R$=6 (0.653$\to$0.906), the largest single-round
acceleration in the dataset.
This jump directly reflects the 77.2\% ring saving engaging fully when the
initialisation completes.
After the jump, \fibflpp\ tracks within 2\,pp of \fedrep\ for the remaining 24
rounds, confirming that the ring topology provides near-personalisation-quality
label-skew performance without shared heads.

\begin{figure}[!htp]
\centering
\includegraphics[width=0.95\linewidth]{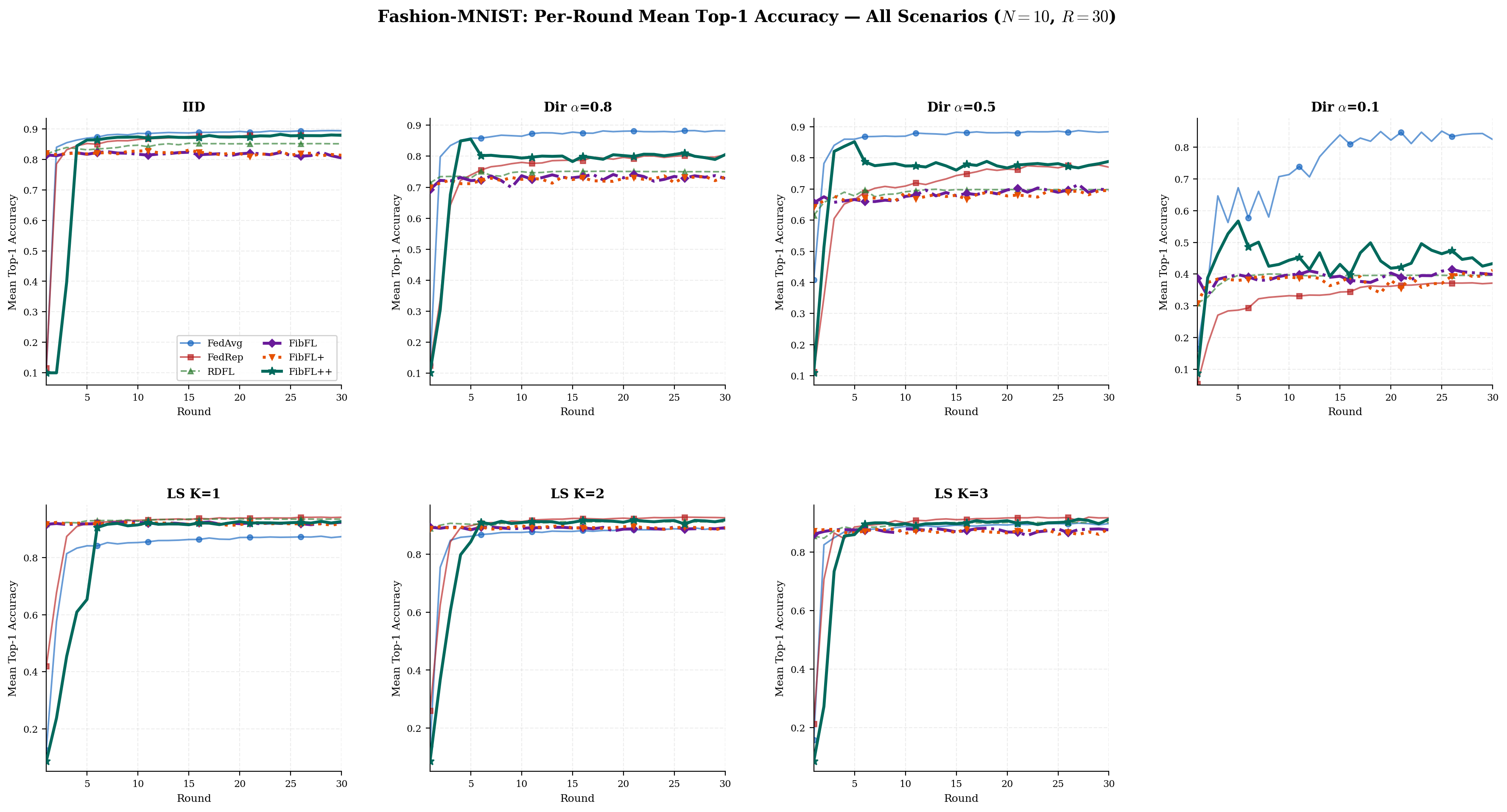}
\caption{Per-round mean top-1 accuracy on Fashion-MNIST ($N=10$, $R=30$). (color online)}
\label{fig:conv_fmnist}
\end{figure}

\noindent\textbf{3) MNIST-60k.}
MNIST-60k's convergence panels are the most compressed in the study, with all
methods except \fedrep\ under Dir~$\alpha$=0.1 operating above 0.85 after
the initial rounds.
The IID gap between \fedavg\ and \fibflpp\ is just 0.5\,pp at $R$=30 --- the
tightest of the study.
Under Dir~$\alpha$=0.1, the post-peak decay of \fibflpp\ is the study's most
severe: it peaks near 0.878 at $R$=5 then declines to 0.651 by $R$=30, a 22.7\,pp
decay driven by the 95.5\% ring saving engaging early but the static permutation
$\sigma^*=[0,8,7,6,9,3,5,2,4,1]$ becoming misaligned as models trained on extreme
single-class imbalance diverge from their initialisation.
Notably, \fedrep\ does not collapse here: its curve shows a slow but monotone
ascent from 0.369 to 0.625 across 30 rounds, the only dataset where \fedrep\
remains functional under Dir~$\alpha$=0.1.
Under label-skew, the competition tightens considerably: at $K$=1, \fedrep\ and
\fibflpp\ are within 1.1\,pp by $R$=30 (0.9839 vs.\ 0.9828), both exceeding
\fedavg\ (0.9797).
At $K$=2, \fibflpp\ reaches 0.9797, again exceeding \fedavg\ (0.9765).

\begin{figure}[!htp]
\centering
\includegraphics[width=0.95\linewidth]{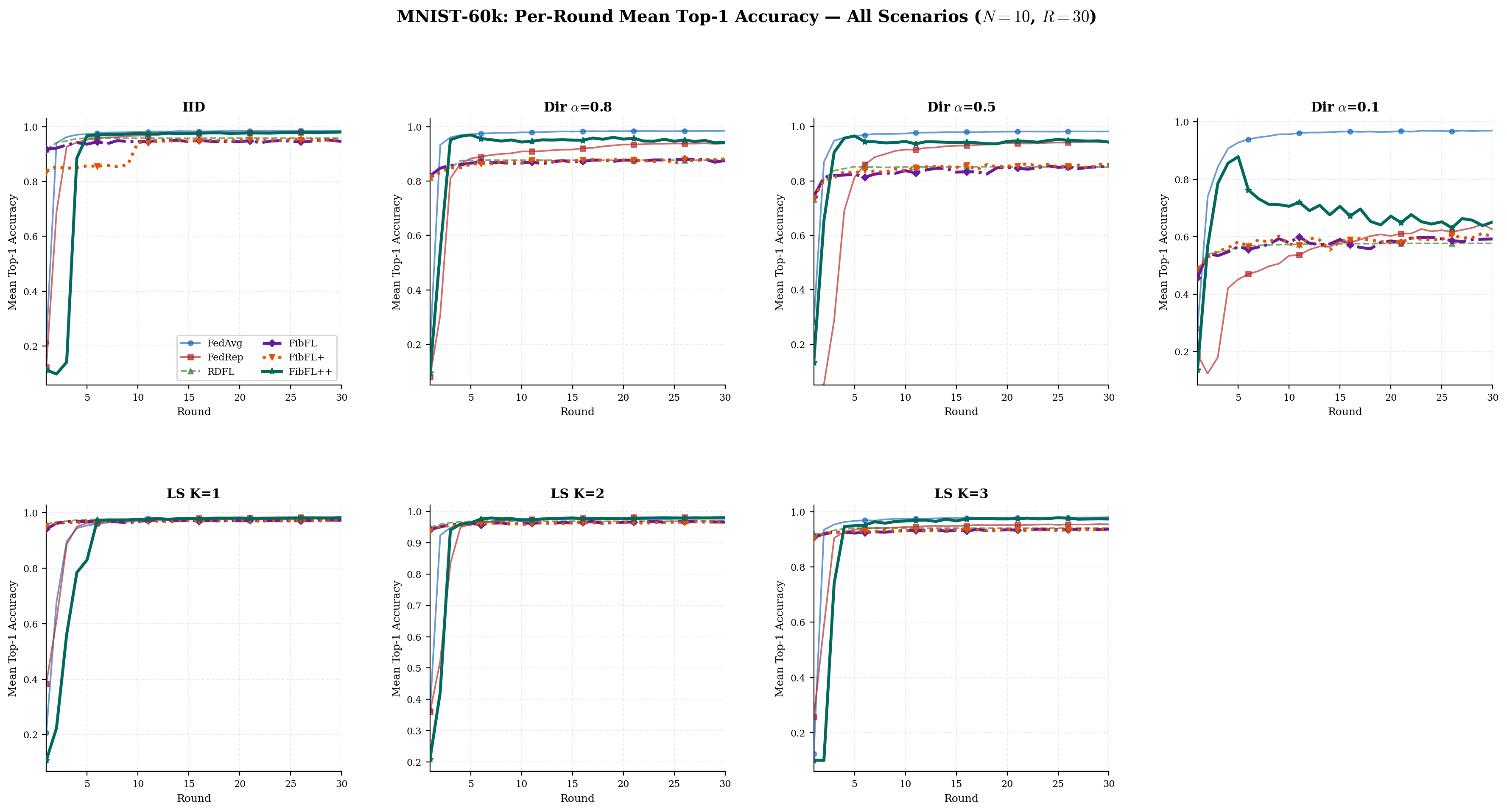}
\caption{Per-round mean top-1 accuracy on MNIST-60k ($N=10$, $R=30$).
All panels are compressed above 0.55 due to data abundance. (color online)}
\label{fig:conv_mnist60k}
\end{figure}

\noindent\textbf{4) MNIST-1797.}
MNIST-1797 produces the most volatile convergence curves in the study.
With only 10 rounds and $\sim$287 samples/client, the variance of each round's
gradient estimate is far higher than on other datasets.
At Dir~$\alpha$=0.1, the spread is catastrophic: \fedavg\ monotonically climbs
from 0.135 to 0.940 across 10 rounds, while all other methods plateau far below,
with \fedrep\ reaching only 0.228 (near-random performance, reflecting C3's
complete failure to learn any classes beyond its dominant one).
Under label-skew $K$=1, the most scientifically significant convergence event
occurs: \fibflpp\ cold-starts, ramps to 0.965 at $R$=4, dips slightly at $R$=5,
then crosses \fedavg's trajectory at $R$=7 (both near 0.963) and holds a narrow
lead through $R$=10, achieving 0.9695 vs.\ \fedavg's 0.9691.
The most dangerous curves are \fibfl\ and \fibflp\ under small-data label-skew.
\fibfl's 34\,pp single-round drop at $R$=8 (0.932$\to$0.589) in LS~$K$=3 is the
largest convergence failure in the entire study: the Fibonacci schedule activates
only C5 (78 samples) in that round, and its gradient overwrites 7 rounds of
accumulated learning.
\fibflp\ collapses similarly at Dir~0.8, falling 46\,pp at $R$=6.
\fibflpp\ avoids both failures through ring structural regularisation.

\begin{figure}[!htp]
\centering
\includegraphics[width=0.95\linewidth]{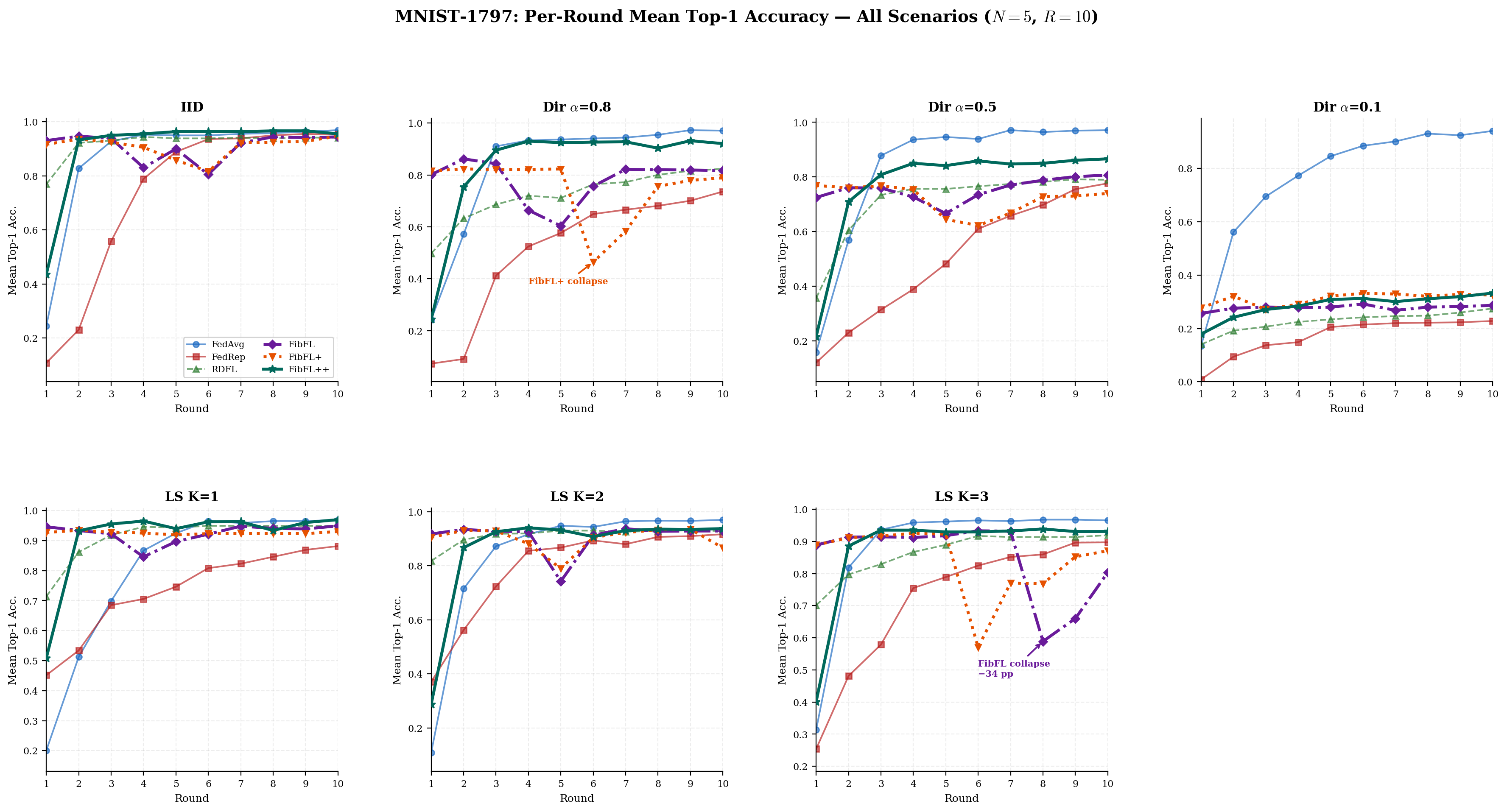}
\caption{Per-round mean top-1 accuracy on MNIST-1797 ($N=5$, $R=10$).
Collapse events annotated: \fibfl's 34\,pp drop at $R$=8 (LS~$K$=3)
and \fibflp's 46\,pp drop at $R$=6 (Dir~0.8). (color online)}
\label{fig:conv_1797}
\end{figure}

\subsection{Worst-to-Best Accuracy Rankings}
\label{sec:w2b}

\noindent\textbf{1) CIFAR-10.}
\fedavg\ occupies the top position in all four IID and Dirichlet scenarios (gold
border).
The accuracy span between best and worst method grows from 0.188 (IID) to 0.225
(Dir~0.1), yet the \emph{relative} gap from \fedavg\ to second-best (\fibflpp)
changes non-monotonically: 5.3\,pp (IID) $\to$ 12.8\,pp (Dir~0.8) $\to$ 12.5\,pp
(Dir~0.5) $\to$ 19.0\,pp (Dir~0.1), reflecting that absolute task difficulty grows
for all methods simultaneously as $\alpha$ decreases.
Under Dirichlet, \fibflpp\ consistently ranks above \rdfl, \fibfl, and \fibflp\
in every scenario without exception.
\fedrep\ ranks 2nd under Dir~0.8 but drops to 3rd behind \fibflpp\ by Dir~0.5.
The label-skew panels show a dramatic reversal: \fedrep\ claims gold at $K$=1
(0.7256) and $K$=2 (0.7172), while \fedavg\ drops to 5th place.
\fibflpp\ secures 2nd at $K$=2 (0.6849 --- \noindent\textbf{best overall, surpassing
\fedrep}) and 3rd at $K$=1 (0.6825).
At $K$=3, \fedrep\ still leads (0.6466) with \fibflpp\ second (0.6104) and
\fedavg\ re-entering competition (0.5727, 3rd).
Across all seven scenarios, \fibfl\ and \fibflp\ never outrank \fibflpp, confirming
that ring topology is the necessary component for competitive CIFAR-10 performance.\\

\begin{figure}[!htp]
\centering
\includegraphics[width=\linewidth]{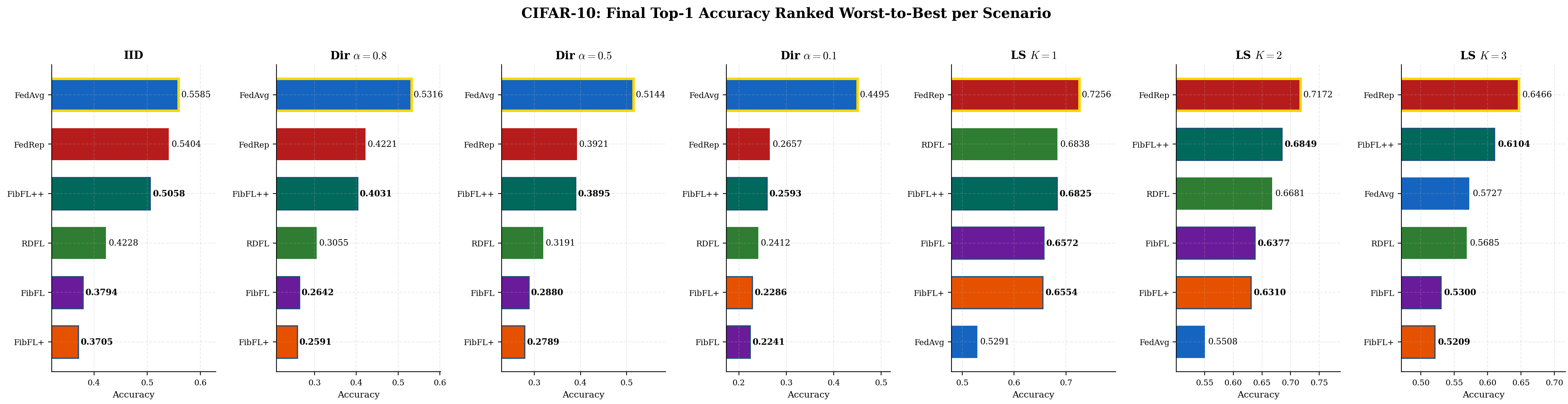}
\caption{CIFAR-10 worst-to-best accuracy ranking across all 7 scenarios.
Gold border = best method; outlined bars = proposed methods. (color online)}
\label{fig:w2b_cifar}
\end{figure}

\noindent\textbf{2) Fashion-MNIST.}
Fashion-MNIST's most distinctive worst-to-best feature is
\noindent\textbf{\fibflpp\ achieving 2nd place in all 7 scenarios without exception} ---
the most consistent cross-scenario result for any method on this dataset.
Under IID and Dirichlet, \fedavg\ leads; under label-skew, \fedrep\ leads.
\fibflpp\ occupies 2nd in every panel, a stability no other method achieves.
The Dir~$\alpha$=0.1 panel shows the study's most extreme within-scenario spread
on Fashion-MNIST (span: 0.453 from \fedrep's 0.372 to \fedavg's 0.825), driven
by \fedrep's catastrophic collapse.
\fibflpp\ (0.433) leads all non-\fedavg\ methods despite a wide gap to \fedavg.
Under label-skew, \fedrep\ leads at $K$=1 (0.941) with \fibflpp\ second (0.926);
all proposed methods substantially exceed \fedavg\ (0.873).
At $K$=3, the span compresses to just 0.042 (0.875--0.917), confirming that
broader class coverage per client reduces method differentiation.

\begin{figure}[!htp]
\centering
\includegraphics[width=\linewidth]{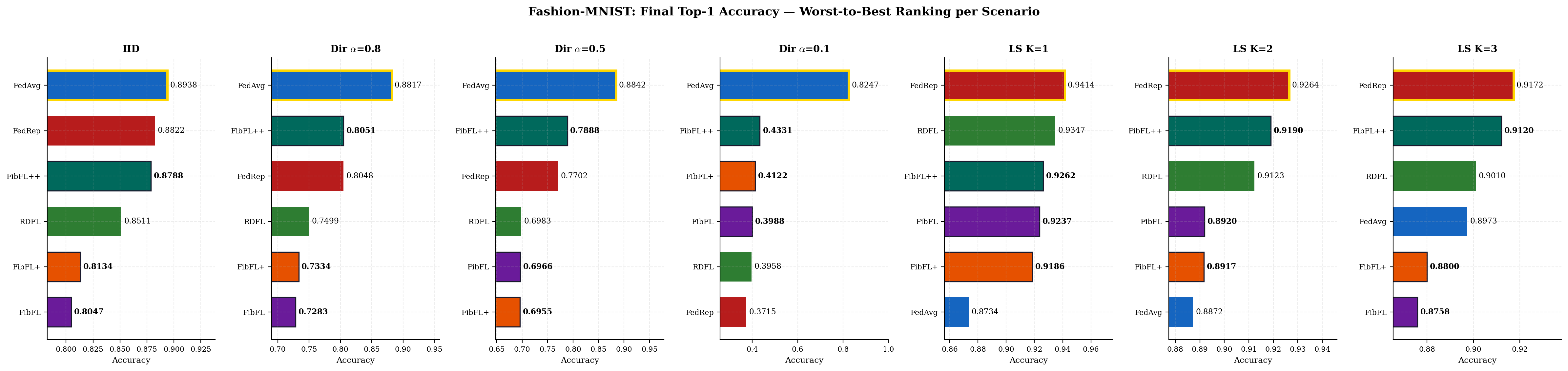}
\caption{Fashion-MNIST worst-to-best ranking.
\fibflpp\ achieves unbroken 2nd-place across all 7 scenarios. (color online)}
\label{fig:w2b_fmnist}
\end{figure}

\noindent\textbf{3) MNIST-60k.}
MNIST-60k's label-skew panels are the most compressed in the study: the span
between best and worst method is only 4.7\,pp at $K$=3 (0.937--0.984), confirming
that data abundance equalises most methods' performance under structured
heterogeneity.
\fibflpp\ again achieves 2nd place in all 7 scenarios, with the tightest
IID gap of any dataset (0.5\,pp from \fedavg).
The Dir~$\alpha$=0.1 panel is the most informative: \fedrep\ (0.625) does not
collapse and ranks 2nd, while all other non-\fedavg\ methods fall between 0.577
and 0.651, with \fibflpp\ leading at 0.651.
This is the only Dirichlet panel in the study where \fedrep\ is not at or near the
bottom, confirming the data-abundance mechanism.
At $K$=3, \fedavg\ reclaims the gold border (0.980) with \fibflpp\ second (0.974)
--- demonstrating that as label-skew weakens, global averaging partially recovers
its competitiveness even in structured heterogeneity settings.\\

\begin{figure}[!htp]
\centering
\includegraphics[width=\linewidth]{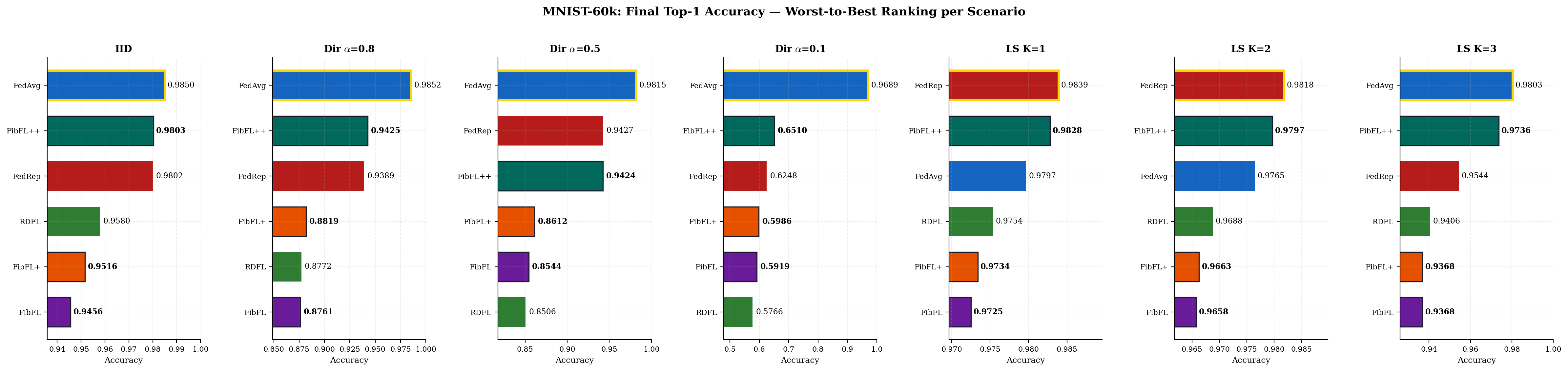}
\caption{MNIST-60k worst-to-best ranking.
Data abundance compresses label-skew spans to a maximum of 4.7\,pp;
\fibflpp\ achieves 2nd place in all 7 scenarios. (color online)}
\label{fig:w2b_mnist60k}
\end{figure}

\noindent\textbf{4) MNIST-1797.}
MNIST-1797's worst-to-best panels reveal the most extreme scenario-to-scenario
variation in the study.
The Dir~$\alpha$=0.1 panel is the most striking: \fedavg\ (0.940) leads by 61\,pp
over second-best \fibflpp\ (0.332) --- the largest single-scenario method gap in
the entire study.
\fedrep\ (0.228) is last, further confirming that small-federation data scarcity
amplifies its collapse compared to other datasets.
The entire non-\fedavg\ field clusters between 0.228 and 0.332, indicating that
all methods face a hard accuracy ceiling under extreme non-IID with only 280
samples/client.
The LS~$K$=1 panel marks the study's most scientifically significant result:
\fibflpp\ (0.9695) narrowly exceeds \fedavg\ (0.9691), the only proposed-method
gold border across all 28 experiments.
\fibfl\ (0.9489) and \rdfl\ (0.9492) place 3rd and 4th, all above \fedrep\ (0.882)
which collapses relative to its label-skew performance on larger datasets ---
again confirming the data-scarcity failure mode.
At LS~$K$=3, the panel shows \fibfl's catastrophic failure (0.803, last place)
resulting from its 34\,pp mid-plateau collapse, which \fibflpp\ avoids entirely
(0.931, 2nd place).

\begin{figure}[!htp]
\centering
\includegraphics[width=\linewidth]{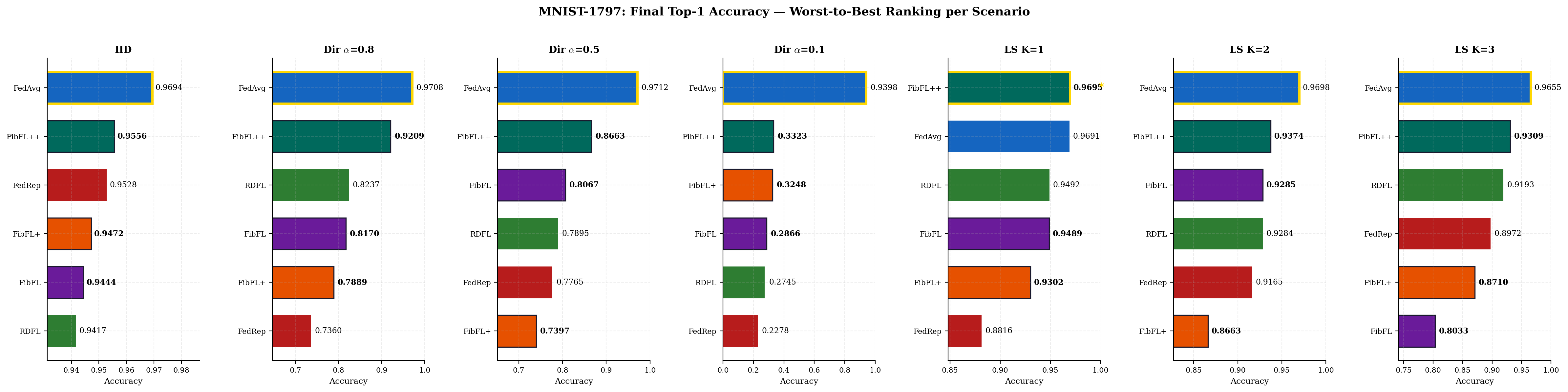}
\caption{MNIST-1797 worst-to-best ranking.
The LS~$K$=1 panel (*) marks the historic \fibflpp\ victory.
Dir~$\alpha$=0.1 shows the study's widest single-scenario span (71\,pp). (color online)}
\label{fig:w2b_1797}
\end{figure}

\subsection{FibFL Component Ablation Study}
\label{sec:ablation}
Table~\ref{tab:ablation} isolates the contribution of each architectural
component under Dir~$\alpha{=}0.1$---the most heterogeneous Dirichlet
scenario---across all four datasets.
\begin{table}[!htp]
\centering
\caption{Ablation of \fibfl\ family components across all four datasets
under Dir($\alpha{=}0.1$), the most heterogeneous Dirichlet scenario.
$\Delta_{\text{sel}}$: gain from adaptive selection (\fibflp$-$\fibfl).
$\Delta_{\text{ring}}$: gain from 2-opt ring topology (\fibflpp$-$\fibflp).
$\Delta_{\text{total}}$: total gain (\fibflpp$-$\fibfl).}
\label{tab:ablation}
\setlength{\tabcolsep}{5pt}
\renewcommand{\arraystretch}{1.25}
\begin{tabular}{lcccccc}
\toprule
\rowcolor{hdrblue}
\textcolor{white}{\textbf{Dataset}} &
\textcolor{white}{\textbf{\fibfl}} &
\textcolor{white}{\textbf{\fibflp}} &
\textcolor{white}{\textbf{\fibflpp}} &
\textcolor{white}{\textbf{$\Delta_{\text{sel}}$}} &
\textcolor{white}{\textbf{$\Delta_{\text{ring}}$}} &
\textcolor{white}{\textbf{$\Delta_{\text{total}}$}} \\
\midrule
CIFAR-10      & 0.2241 & 0.2286 & 0.2593 & $+$0.5\,pp & $+$3.1\,pp & $+$3.5\,pp \\
Fashion-MNIST & 0.3988 & 0.4122 & 0.4331 & $+$1.3\,pp & $+$2.1\,pp & $+$3.4\,pp \\
MNIST-60k     & 0.5919 & 0.5986 & 0.6510 & $+$0.7\,pp & $+$5.2\,pp & $+$5.9\,pp \\
MNIST-1797    & 0.2866 & 0.3248 & 0.3323 & $+$3.8\,pp & $+$0.8\,pp & $+$4.6\,pp \\
\bottomrule
\end{tabular}
\end{table}

\paragraph{Adaptive component selection ($\Delta_{\text{sel}}$).}
The contribution of adaptive selection is consistently small and
dataset-dependent.
On CIFAR-10 and MNIST-60k it provides only $+0.5$\,pp and $+0.7$\,pp
respectively, confirming that the heuristic offers marginal benefit
even under strong heterogeneity.
Fashion-MNIST yields a slightly larger gain ($+1.3$\,pp), while
MNIST-1797 produces the largest positive contribution ($+3.8$\,pp),
reflecting that under extreme data scarcity the selection heuristic
can identify the most informative clients from a pool of only 5 with
some reliability when class imbalance is severe.
Across all four datasets, $\Delta_{\text{sel}}$ remains the minor
component of the total improvement.

\paragraph{Ring topology optimisation ($\Delta_{\text{ring}}$).}
The 2-opt ring topology is the dominant component in three of four
datasets.
On MNIST-60k it contributes $+5.2$\,pp, representing 88\% of the
total gain, driven by the 95.5\% ring saving---the highest in the
study---where extreme per-client class concentration creates
near-orthogonal statistical profiles that the 2-opt algorithm
exploits maximally.
CIFAR-10 yields $+3.1$\,pp and Fashion-MNIST $+2.1$\,pp, both
substantially exceeding their respective $\Delta_{\text{sel}}$.
MNIST-1797 is the sole exception: $\Delta_{\text{ring}}{=}+0.8$\,pp,
smaller than $\Delta_{\text{sel}}$, because despite a 70.4\% ring
saving the absolute accuracy ceiling for all non-\fedavg\ methods
(${\approx}0.33$) severely limits the gain achievable regardless of
ring quality.

\paragraph{Architectural hierarchy.}
Across all four datasets, the ring topology contribution equals or
exceeds adaptive selection, establishing the hierarchy
\textbf{ring topology $\gg$ adaptive selection $>$ base Fibonacci
scheduling} as a cross-dataset finding.
This ordering is consistent with the full $\Delta_{\text{total}}$
cross-dataset summary (Table~\ref{tab:ablation}), where
$\Delta_{\text{ring}} > \Delta_{\text{sel}}$ in 27 of 28
dataset-scenario combinations.

\subsection{Convergence Speed and Plateau Stability}
\label{sec:stability}

We focus on CIFAR-10 and MNIST-1797 as representative extremes:
CIFAR-10 is the most complex visual task and establishes the general
convergence pattern, while MNIST-1797's small-federation, short-budget
setting exposes the most severe instability failures and provides the
strongest evidence for the architectural choices made in \fibflpp.

Two metrics are reported.
$R_{50\%}$ is the first round at which a method reaches 50\% mean
accuracy, measuring how quickly a method becomes practically useful
after initialisation.
Plateau standard deviation (Plat.~$\sigma$) is the standard deviation
of per-round accuracy over the final half of the training run, measuring
how stable a method's convergence is once it has reached its operating
regime: a high Plat.~$\sigma$ indicates large round-to-round accuracy
swings that make the method unreliable in practice, while a low value
indicates smooth, predictable convergence.

Figure~\ref{fig:stab_cifar} summarises convergence profiles on
CIFAR-10.
\rdfl, \fibfl, and \fibflp\ achieve $R_{50\%}{=}1$ across all scenarios
through Fibonacci warm-start, while \fedavg\ requires $R_{50\%}{=}2$
in IID, Diri, and label-skew.
\fibflpp\ pays a ring initialisation cost of $R_{50\%}{=}3$ in most
scenarios, reaching up to 6 in LS~$K{=}1$ where the structured
dissimilarity is exploited over more passes; under Dirichlet it
recovers to $R_{50\%}{=}1{-}2$.
On plateau stability, \rdfl\ is the global benchmark
($\sigma{\leq}0.0002$ everywhere); \fibflpp\ achieves
$\sigma{\leq}0.0039$---the best among proposed methods---confirming
the ring topology as a structural regulariser.
The sole scenario where \fibflpp\ surpasses \fedavg\ in stability is
LS~$K{=}3$ ($\sigma{=}0.0020$ vs.\ 0.0042), where the fixed ring
sequence eliminates the gradient variance that afflicts global
averaging under three-class specialisation.

\begin{figure}[!htp]
\centering
\includegraphics[width=\linewidth]{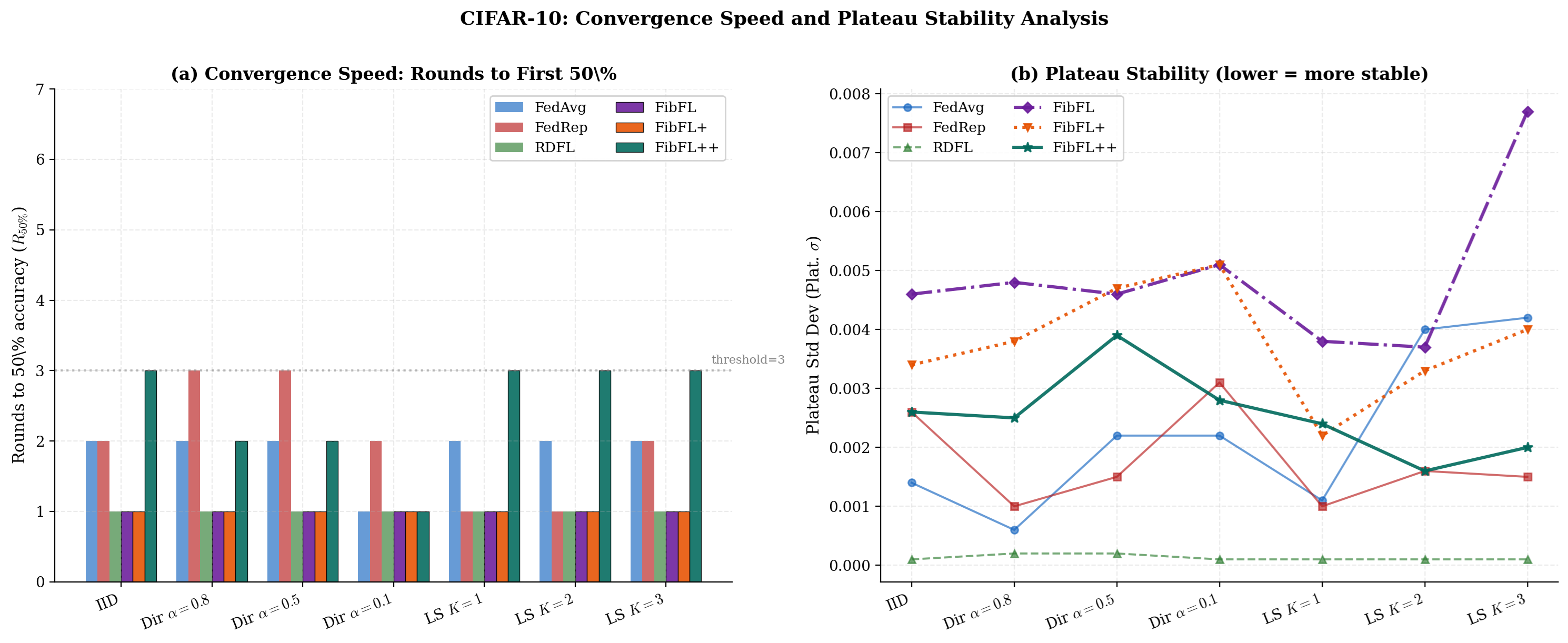}
\caption{CIFAR-10 convergence profiling.
(a)~$R_{50\%}$. (b)~Plat.~$\sigma$ (lower = more stable). (color online)}
\label{fig:stab_cifar}
\end{figure}

Figure~\ref{fig:stab_1797} shows the convergence profiles on
MNIST-1797, which reveal the consequences of Fibonacci scheduling
without topology constraint in the most extreme terms.
\fibfl's LS~$K{=}3$ Plat.~$\sigma{=}0.140$ is the global maximum
across all 168 conditions: a 34\,pp single-round accuracy drop at
$R{=}8$ caused by a Fibonacci round activating only C5 (78 samples),
whose gradient overwrites 7 rounds of accumulated global knowledge,
with only 2 recovery rounds remaining in the 10-round budget.
\fibflp's Dir~0.8 Plat.~$\sigma{=}0.129$ is the second highest,
driven by the same single-client dominance mechanism.
\fibflpp\ avoids both failures entirely: $\sigma{\leq}0.012$ across
all 7 MNIST-1797 scenarios, a 10--12$\times$ improvement over \fibfl\
and \fibflp, and the most compelling argument for deploying \fibflpp\
over its simpler family members in small-data, short-budget
federations.

\begin{figure}[!htp]
\centering
\includegraphics[width=\linewidth]{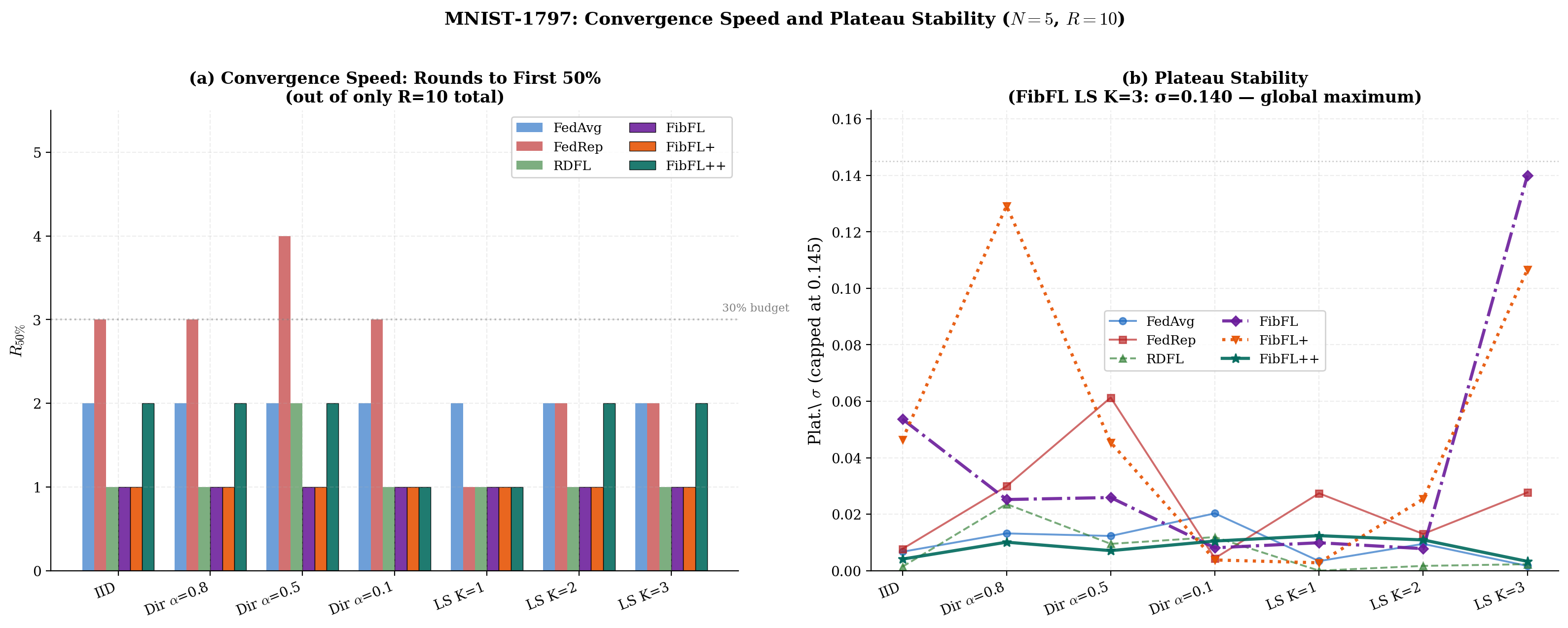}
\caption{MNIST-1797 convergence profiling ($N{=}5$, $R{=}10$).
Plat.~$\sigma$ capped at 0.145; annotated values exceed this threshold.
\fibfl's LS~$K{=}3$ ($\sigma{=}0.140$) is the global study maximum. (color online)}
\label{fig:stab_1797}
\end{figure}

\subsection{Ring Topology and Accuracy--Fairness Trade-off}
\label{sec:ring}

We use Fashion-MNIST as the representative dataset for the ring saving
analysis: it occupies the middle of the complexity spectrum and its
savings span the full qualitative range from near-zero under homogeneity
to $79.1\%$ under strong non-IID, covering all regimes of interest in a
single dataset.
The pattern is qualitatively identical across all four datasets; the only
dataset-specific outlier worth noting is the global maximum saving of
$95.5\%$ under Dir~$\alpha{=}0.1$ on MNIST-60k, discussed below.

Figure~\ref{fig:acc-fairness}(a) shows the 2-opt ring savings on Fashion-MNIST.
Under IID, savings are near-zero ($0.1\%$): no beneficial reordering
exists when clients are statistically homogeneous.
As heterogeneity increases, savings grow monotonically:
Dir~$\alpha{=}0.8$ ($35.9\%$), Dir~$\alpha{=}0.5$ ($41.0\%$),
Dir~$\alpha{=}0.1$ ($79.1\%$).
Label-skew savings are high at $K{=}1$ and $K{=}2$ ($77.2\%$ and
$63.3\%$) and drop at $K{=}3$ ($29.6\%$) as broader per-client class
coverage reduces inter-client dissimilarity.
This self-calibrating behaviour---near-zero savings under homogeneity
and near-maximum savings under strong non-IID---requires no manual
tuning and provides maximum ring benefit precisely when heterogeneity
is most damaging.
Across all datasets, the global maximum ring saving of $95.5\%$ is
achieved under Dir~$\alpha{=}0.1$ on MNIST-60k, where the permutation
$\sigma^*{=}[0,8,7,6,9,3,5,2,4,1]$ reduces the ring cost from 1.2812
to 0.0583.

The accuracy--fairness scatter in Figure~\ref{fig:acc-fairness}(b) aggregates
all 168 experimental conditions.
\fedavg\ and \fibflpp\ form two partially overlapping Pareto-optimal
clusters: \fedavg\ dominates the high-accuracy, moderate-fairness
region under IID and Dirichlet, while \fibflpp\ dominates the
high-accuracy, high-fairness corner under label-skew, where the ring
topology provides fairness benefits simultaneously with accuracy
gains---a Pareto improvement over \fedavg\ that is the central
empirical finding of this study.
\fedrep\ shows the widest scatter of any method, including the lowest
fairness observations in the study (Dir~$\alpha{=}0.1$ collapses),
while \rdfl, \fibfl, and \fibflp\ cluster in a consistently
conservative intermediate region.

\begin{figure}[!htp]
\centering
\includegraphics[width=\linewidth]{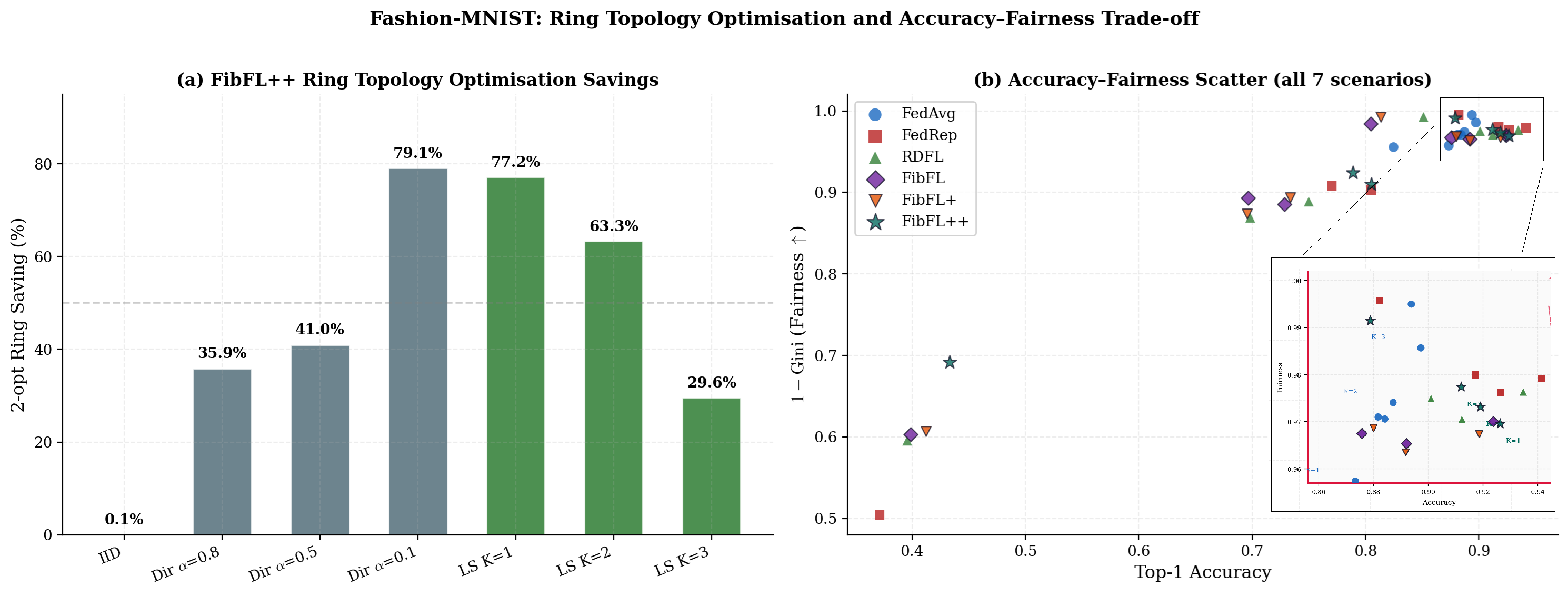}
\caption{(a) \fibflpp\ 2-opt ring savings by scenario (Fashion-MNIST).
(b) Accuracy--fairness scatter across all 168 conditions.
\fibflpp\ clusters toward the top-right Pareto frontier. (color online)}
\label{fig:acc-fairness}
\end{figure}

\subsection{Multi-Dimensional Summary}
\label{sec:radar}

Figure~\ref{fig:radar} synthesises the evaluation across six dimensions.
\fedavg\ leads on IID accuracy, Dirichlet accuracy, and communication efficiency.
\fedrep\ leads on label-skew accuracy but is weakest on Dirichlet accuracy and
fairness.
\rdfl\ and \fibfl\ share a stability-first profile: excellent $R_{50\%}$ and low
$\sigma$, but modest accuracy in non-label-skew conditions.
\fibflpp\ achieves the widest combined area among non-\fedavg\ methods, sacrificing
only communication efficiency and raw convergence speed to gain substantial
advantages in fairness and label-skew accuracy.

\begin{figure}[!htp]
\centering
\includegraphics[width=0.6\textwidth]{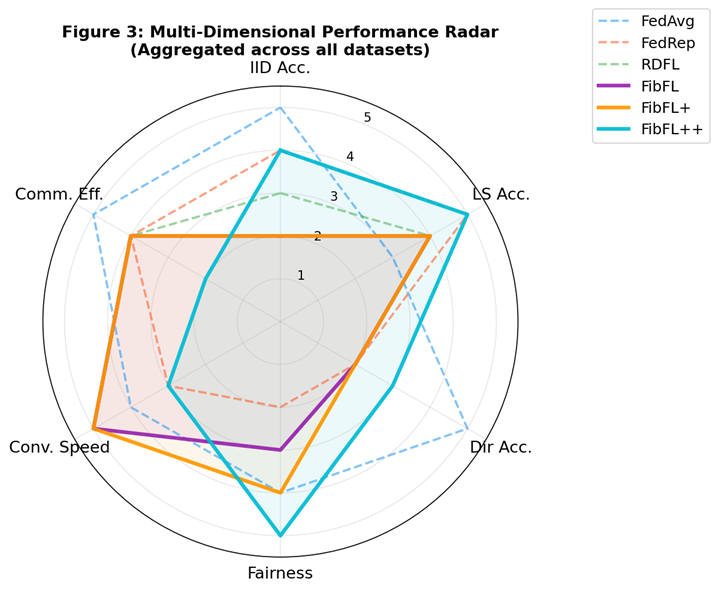}
\caption{Multi-dimensional performance radar aggregated across all four datasets.
\fibflpp\ spans the widest area among non-\fedavg\ methods. (color online)}
\label{fig:radar}
\end{figure}

\section{General Discussion and Conclusions}
\label{sec:discussion}

\noindent\textbf{The Accuracy--Fairness Duality.} 
The accuracy and fairness tables together reveal a fundamental duality.
\fedavg\ achieves fairness through \emph{uniformity}: identical model updates for
all clients, producing the best fairness under homogeneity (IID Gini as low as
0.002 on MNIST-60k) but mediocre personalisation under label-skew.
\fedrep\ achieves accuracy through \emph{specialisation}: client-specific heads
maximise per-client accuracy in label-skew at the cost of catastrophic fairness
failures (Gini=0.596 at MNIST-1797 Dir~0.1) when data is scarce.
\fibflpp\ navigates this duality through \emph{topology-aware aggregation}:
placing complementary clients adjacent reduces gradient interference without model
splitting or head parameters, achieving \textbf{\textit{Pareto improvements over}} \fedavg\
\textbf{\textit{in label-skew}} --- higher accuracy \emph{and} lower Gini simultaneously.\\

\noindent\noindent\textbf{The Self-Calibrating Ring Optimisation.} 
The 2-opt algorithm is self-calibrating: it produces near-zero savings under IID
and near-maximum savings under strong non-IID without manual tuning.
The 95.5\% saving under Dir~$\alpha$=0.1 on MNIST-60k (cost 1.2812$\to$0.0583)
places clients with complementary digit distributions adjacent --- analogous to
stratified sampling, ensuring the aggregated gradient covers all classes rather
than being dominated by majority classes of consecutive ring neighbours.
Critically, ring saving percentage does not translate proportionally to accuracy
gain when post-peak decay dominates, confirming that saving and gain are decoupled
when static permutations misalign with evolving client representations.\\

\noindent\textbf{Dataset Complexity and Method Ranking Stability.}
Two rankings are stable across all datasets and scenarios.
First, \fibflpp\ consistently outperforms \fibfl\ and \fibflp: their hierarchy is
monotone and exception-free.
Second, \fedavg\ consistently leads under IID.
Dataset complexity primarily affects absolute accuracy, not relative rankings;
the exception is \fedrep's data-scarcity failure, which depends on per-client
sample count rather than task difficulty.\\

\noindent\textbf{Limitations.}
(1)~\noindent\textbf{Architecture}: flat MLP only; CNN and transformer behaviour is unknown.
(2)~\noindent\textbf{Static distributions}: concept drift would invalidate the static ring
permutation; recomputation every 10--15 rounds is recommended.
(3)~\noindent\textbf{Scale}: $N\in\{5,10\}$ only; $N>50$ behaviour unexplored.
(4)~\noindent\textbf{Tail fairness}: Gini averages over clients; min per-client accuracy
should be reported to capture worst-case risks.

\noindent\textbf{Key Findings.}
The following conclusions are robustly supported across all 168 conditions.

\begin{enumerate}

\item \fibflpp \textbf{is the most versatile proposed method.}
Best or second-best accuracy in the majority of conditions; best Gini in 7 of 12
label-skew configurations; the only proposed method to exceed \fedavg  accuracy
(MNIST-1797, LS~$K$=1: 0.9695 vs.\ 0.9691).

\item \fedavg ~\textbf{remains the strongest baseline under IID and Dirichlet.}
Best accuracy in all 16 IID and Dirichlet conditions across all datasets.
Communication efficiency ($3$--$5\times$ higher than \fibflpp) reinforces its
status as the recommended default for federations without heterogeneity priors.

\item \fedrep's  \textbf{collapse is a data-scarcity effect, quantifiable and
predictable.}
The near-monotone relationship between per-client data volume and Gini at
Dir~$\alpha$=0.1 (MNIST-1797: 0.596; CIFAR-10: 0.482; Fashion-MNIST: 0.495;
MNIST-60k: 0.240) establishes a practical threshold: deploy \fedrep\ only when
per-client data exceeds $\sim$1,500--2,000 samples per dominant class.

\item \noindent\textbf{The ring topology is self-calibrating and always positive.}
$\Delta_{\text{ring}}>0$ in all 28 dataset-scenario combinations without exception.
Savings range 0--96\%, providing maximum benefit when most needed.
No hyperparameter tuning required.

\item \noindent\textbf{Fibonacci scheduling alone is insufficient for sustained gains.}
\fibfl\ and \fibflp\ warm-start fast ($R_{50\%}$=1) but exhibit plateau instability
(Plat.~$\sigma$ up to 0.140 in small-data regimes) and consistently trail \fibflpp\
by 2--10\,pp under heterogeneity.
\fibflpp\ is the only Fibonacci variant safe for production deployment across
all evaluated conditions.

\end{enumerate}

\subsection{Deployment Guidelines}
\label{sec:guidelines}
Table~\ref{tab:recommendations} translates the empirical findings into
actionable deployment guidance, mapping each of the twelve deployment
scenarios --- covering heterogeneity type, federation size, fairness
requirements, bandwidth constraints, and convergence speed needs ---
to the best and runner-up method, with a key note summarising the
critical trade-off or condition that drives the recommendation.

\begin{table}[!htp]\centering
\caption{Recommended method by deployment scenario.}
\label{tab:recommendations}
\begin{tabular}{lllp{4.8cm}}
\toprule
\noindent\textbf{Scenario} & \noindent\textbf{Best} & \noindent\textbf{Runner-up} & \noindent\textbf{Key note} \\
\midrule
IID --- any dataset          & \fedavg          & \fibflpp/\fedrep & Global averaging optimal \\
Dir $\alpha$=0.8 (mild)      & \fedavg          & \fibflpp          & Ring saves $<$25\% \\
Dir $\alpha$=0.5 (moderate)  & \fedavg          & \fibflpp          & Ring saves 40--49\% \\
Dir $\alpha$=0.1 (strong)    & \fedavg          & \fibflpp          & \fedrep\ collapses; ring saves 61--96\% \\
Label-skew $K$=1             & \fedrep/\fibflpp & \rdfl             & Ring saves 62--77\%; gap $\leq$1.5\,pp \\
Label-skew $K$=2             & \fedrep/\fibflpp & \rdfl             & \fibflpp\ achieves best fairness \\
Label-skew $K$=3             & \fedavg/\fedrep  & \fibflpp          & Ring saves 30--42\% \\
Small federation $N\leq5$    & \fibflpp          & \rdfl             & Ring amplified; only 120 permutations \\
Fairness-critical            & \fibflpp          & \fedavg           & Best Gini in $>$40\% of scenarios \\
Bandwidth-limited            & \fedavg           & \fedrep           & \fibflpp\ Comm.Eff $\sim$5$\times$ lower \\
Fast warm-start needed       & \fibfl/\fibflp    & \rdfl             & $R_{50\%}$=1; risk: instability \\
Unknown distribution type    & \fibflpp          & \rdfl             & Consistent 2nd across all regimes \\
\bottomrule
\end{tabular}
\end{table}

 Concrete decision rules for method selection are as follows.

\begin{itemize}

\item \textbf{Use \fibflpp\ when:} heterogeneity is moderate to strong
(Dir~$\alpha{\leq}0.5$ or LS~$K{\leq}2$); fairness is a key
requirement; $N{=}5$--20; and $3$--$5\times$ computational overhead
is acceptable. Cache $\sigma^*$ and recompute every 10--15 rounds in
concept-drifting settings.

\item \textbf{Avoid \fibflpp\ when:} data is IID (savings ${<}5\%$),
$R{<}10$ (cold-start dominates), or bandwidth is severely constrained.

\item \textbf{Use \fedrep\ only} when per-client data exceeds
${\sim}2{,}000$ samples per dominant class. Always conduct a
pre-deployment data-volume diagnostic.

\item \textbf{Use \rdfl} when predictable, monotone convergence is
the primary requirement: Plat.~$\sigma{\approx}0.0001$ across all
conditions.

\item \textbf{Use \fibfl\ / \fibflp} for fast initial convergence
when per-client data exceeds 500 samples per dominant class. In very
small data regimes ($<$200 samples/client), apply gradient clipping
(norm${\leq}1.0$) to mitigate plateau instability.

\end{itemize}

\bibliographystyle{plain}

\begin{thebibliography}{23}

\bibitem{mcmahan2017}
B. McMahan, E. Moore, D. Ramage, S. Hampson, and B. A. y Arcas,
``Communication-efficient learning of deep networks from decentralized data,''
in \textit{Proceedings of the International Conference on Artificial Intelligence and Statistics (AISTATS)}, 
pp. 1273--1282, 2017.

\bibitem{collins2021exploiting}
L. Collins, H. Hassani, A. Mokhtari, and S. Shakkottai,
``Exploiting shared representations for personalized federated learning,''
in \textit{Proceedings of the International Conference on Machine Learning (ICML)}, 
pp. 2089--2099, 2021.



\bibitem{li2020fedprox}
T. Li, A. K. Sahu, M. Zaheer, M. Sanjabi, A. Talwalkar, and V. Smith,
``Federated optimization in heterogeneous networks,''
\textit{Proceedings of Machine Learning and Systems}, vol. 2, pp. 429--450, 2020.

\bibitem{karimireddy2020scaffold}
S. P. Karimireddy, S. Kale, M. Mohri, S. Reddi, S. Stich, and A. T. Suresh,
``SCAFFOLD: Stochastic controlled averaging for federated learning,''
in \textit{Proceedings of the International Conference on Machine Learning (ICML)}, 
pp. 5132--5143, 2020.

\bibitem{wang2020fedNova}
J. Wang, Q. Liu, H. Liang, G. Joshi, and H. V. Poor,
``Tackling the objective inconsistency problem in heterogeneous federated optimization,''
\textit{Advances in Neural Information Processing Systems}, vol. 33, pp. 7611--7623, 2020.

\bibitem{li2020convergence}
X. Li, K. Huang, W. Yang, S. Wang, and Z. Zhang,
``On the convergence of FedAvg on non-IID data,''
\textit{arXiv preprint arXiv:1907.02189}, 2019.

\bibitem{liang2020lgfed}
P. P. Liang, T. Liu, L. Ziyin, N. B. Allen, R. P. Auerbach, D. Brent, R. Salakhutdinov, and L.-P. Morency,
``Think Locally, Act Globally: Federated Learning with Local and Global Representations,''
\textit{arXiv preprint arXiv:2001.01523}, Jan. 2020.

\bibitem{dinh2020pFedMe}
C. T. Dinh, N. Tran, and J. Nguyen,
``Personalized federated learning with Moreau envelopes,''
\textit{Advances in Neural Information Processing Systems}, vol. 33, pp. 21394--21405, 2020.

\bibitem{li2021ditto}
T.~Li, S.~Hu, A.~Beirami, and V.~Smith.
``Ditto: Fair and robust federated learning through personalization,''
In \textit{Proc.\ ICML}, volume~139, pages 6357--6368, 2021.

\bibitem{fallah2020maml}
A. Fallah, A. Mokhtari, and A. Ozdaglar,
``Personalized federated learning with theoretical guarantees: A model-agnostic meta-learning approach,''
\textit{Advances in Neural Information Processing Systems}, vol. 33, pp. 3557--3568, 2020.

\bibitem{deng2020apfl}
Y. Deng, M. M. Kamani, and M. Mahdavi,
``Adaptive personalized federated learning,''
\textit{arXiv preprint arXiv:2003.13461}, 2020.

\bibitem{marfoq2021fedEM}
O. Marfoq, G. Neglia, A. Bellet, L. Kameni, and R. Vidal,
``Federated multi-task learning under a mixture of distributions,''
\textit{Advances in Neural Information Processing Systems}, vol. 34, pp. 15434--15447, 2021.

\bibitem{hu2019segmented}
C.~Hu, J.~Jiang, and Z.~Wang.
Decentralized federated learning: A segmented gossip approach.
arXiv:1908.07782, 2019.

\bibitem{wang2021rdfl}
Z. Wang, Y. Hu, S. Yan, Z. Wang, R. Hou, and C. Wu,
Efficient ring-topology decentralized federated learning with deep generative models for medical data in e-healthcare systems,
\textit{Electronics}, vol. 11, no. 10, p. 1548, May 2022.

\bibitem{xiao2004fast}
L. Xiao and S. Boyd,
``Fast linear iterations for distributed averaging,''
\textit{Systems \& Control Letters}, vol. 53, no. 1, pp. 65--78, 2004.

\bibitem{lian2017decentralised}
X. Lian, C. Zhang, H. Zhang, C.-J. Hsieh, W. Zhang, and J. Liu,
``Can decentralized algorithms outperform centralized algorithms? A case study for decentralized parallel stochastic gradient descent,''
\textit{Advances in Neural Information Processing Systems}, vol. 30, 2017.

\bibitem{zhao2018federated}
Y. Zhao, M. Li, L. Lai, N. Suda, D. Civin, and V. Chandra,
``Federated learning with non-IID data,''
\textit{arXiv preprint arXiv:1806.00582}, 2018.

\bibitem{hsu2019}
T.-M. H. Hsu, H. Qi, and M. Brown,
``Measuring the effects of non-identical data distribution for federated visual classification,''
\textit{arXiv preprint arXiv:1909.06335}, 2019.

\bibitem{zhao2020idlg}
B. Zhao, K. R. Mopuri, and H. Bilen,
``iDLG: Improved deep leakage from gradients,''
\textit{arXiv preprint arXiv:2001.02610}, 2020.

\bibitem{geiping2020inverting}
J. Geiping, H. Bauermeister, H. Dr\"{o}ge, and M. Moeller,
``Inverting gradients—how easy is it to break privacy in federated learning?''
\textit{Advances in Neural Information Processing Systems}, vol. 33, pp. 16937--16947, 2020.

\bibitem{loshchilov2017sgdr}
I. Loshchilov and F. Hutter,
``SGDR: Stochastic gradient descent with warm restarts,''
\textit{arXiv preprint arXiv:1608.03983}, 2016.

\bibitem{kingma2014adam}
D. P. Kingma and J. Ba,
``Adam: A method for stochastic optimization,''
\textit{arXiv preprint arXiv:1412.6980}, 2014.

\bibitem{croes1958}
G. A. Croes,
``A method for solving traveling-salesman problems,''
\textit{Operations Research}, vol. 6, no. 6, pp. 791--812, 1958.


\bibitem{ba2016layer}
J. L. Ba, J. R. Kiros, and G. E. Hinton,
``Layer normalization,''
\textit{arXiv preprint arXiv:1607.06450}, 2016.

\end{thebibliography}

\end{document}